\tikzset{%
  do path picture/.style={%
    path picture={%
      \pgfpointdiff{\pgfpointanchor{path picture bounding box}{south west}}%
        {\pgfpointanchor{path picture bounding box}{north east}}%
      \pgfgetlastxy\x\y%
      \tikzset{x=\x/2,y=\y/2}%
      #1
    }
  },
  sin wave/.style={do path picture={    
    \draw [line cap=round] (-3/4,0)
      sin (-3/8,1/2) cos (0,0) sin (3/8,-1/2) cos (3/4,0);
  }},
  cross/.style={do path picture={    
    \draw [line cap=round] (-1,-1) -- (1,1) (-1,1) -- (1,-1);
  }},
  plus/.style={do path picture={    
    \draw [line cap=round] (-3/4,0) -- (3/4,0) (0,-3/4) -- (0,3/4);
  }}
}
\crefname{equation}{Eq.}{Eqs.}
\crefname{figure}{Fig.}{Figs.}
\crefname{table}{Tab.}{Tabs.}
\crefname{section}{Sec.}{Secs.}
\crefname{appendix}{App.}{Apps.}
\crefname{algorithm}{Alg.}{Algs.}
\crefname{lemma}{Lem.}{Lems.}
\crefname{theorem}{Thm.}{Thms}
\crefname{definition}{Def.}{Defs}
\crefname{proposition}{Prop.}{Props.}
\crefname{corollary}{Cor.}{Cors.}
\crefname{remark}{Rmk.}{Rmks}
\newtheorem{theorem}{Theorem}[section]
\newtheorem{proposition}[theorem]{Proposition}
\newcommand{\unif}{\mathrm{Unif}}
\def\eqref#1{(\ref{#1})}
\newcommand{\dif}{{\mathrm{d}}}
\def\ve{{\bm{e}}}
\def\vw{{\bm{w}}}
\def\vx{{\bm{x}}}
\def\vy{{\bm{y}}}
\def\vz{{\bm{z}}}
\def\valpha{{\bm{\alpha}}}
\def\vbeta{{\bm{\beta}}}
\def\vpi{{\bm{\pi}}}
\DeclareMathAlphabet{\mathsfit}{\encodingdefault}{\sfdefault}{m}{sl}
\SetMathAlphabet{\mathsfit}{bold}{\encodingdefault}{\sfdefault}{bx}{n}
\def\gD{{\mathcal{D}}}
\def\gG{{\mathcal{G}}}
\def\gP{{\mathcal{P}}}
\newcommand{\etens}[1]{\mathsfit{#1}}
\def\ety{{\etens{y}}}
\newcommand{\E}{\mathbb{E}}
\renewcommand{\P}{\mathbb{P}}
\newcommand{\Ls}{\mathcal{L}}
\newcommand{\R}{\mathbb{R}}
\newcommand{\softmax}{\mathrm{softmax}}
\newcommand{\KL}{D_{\mathrm{KL}}}
\newcommand{\var}{\mathrm{var}}
\DeclareMathOperator*{\argmax}{arg\,max}
\DeclareMathOperator*{\argmin}{arg\,min}
\DeclareMathOperator{\diag}{diag}
\newcommand{\hyphen}{{\text{-}}}
\title{COS-DPO: Conditioned One-Shot Multi-Objective Fine-Tuning Framework}
\author[1]{\href{mailto:<yinuoren@stanford.edu>?Subject=Your UAI 2025 paper}{Yinuo~Ren}{}}
\author[2]{Tesi~Xiao}
\author[2]{Michael~Shavlovsky}
\author[3,1]{Lexing~Ying}
\author[2]{Holakou~Rahmanian}
\affil[1]{%
    Institute for Computational and Mathematical Engineering (ICME), Stanford University
}
\affil[2]{%
    Amazon
}
\affil[3]{%
    Department of Mathematics, Stanford University
}
\begin{document}
\maketitle

\begin{abstract}
    In LLM alignment and many other ML applications, one often faces the \emph{Multi-Objective Fine-Tuning} (MOFT) problem, \emph{i.e.}, fine-tuning an existing model with datasets labeled w.r.t. different objectives simultaneously.
    To address the challenge, we propose a \emph{Conditioned One-Shot} fine-tuning framework (COS-DPO) that extends the Direct Preference Optimization technique, originally developed for efficient LLM alignment with preference data, to accommodate the MOFT settings. 
    By direct conditioning on the weight across auxiliary objectives, our Weight-COS-DPO method enjoys an efficient one-shot training process for profiling the Pareto front and is capable of achieving comprehensive trade-off solutions even in the post-training stage.
    Based on our theoretical findings on the linear transformation properties of the loss function, we further propose the Temperature-COS-DPO method that augments the temperature parameter to the model input, enhancing the flexibility of post-training control over the trade-offs between the main and auxiliary objectives.
    We demonstrate the effectiveness and efficiency of the COS-DPO framework through its applications to various tasks, including the Learning-to-Rank (LTR) and LLM alignment tasks, highlighting its viability for large-scale ML deployments. 
\end{abstract}

\section{Introduction}

\emph{Direct Preference Optimization (DPO)}~\citep{rafailov2024direct} has been introduced as a memory- and computation-efficient alternative to the traditional \emph{Reinforcement Learning with Human Feedback (RLHF)}~\citep{christiano2017deep,stiennon2020learning,ouyang2022training} in Large Language Model (LLM) alignment. The method fine-tunes a pre-trained LLM with additional data that indicates the preference between different proposals w.r.t. customized objectives, such as safety, verbosity, coherence, \emph{etc.}~\citep{wu2024fine}.
The idea of DPO is to reparametrize the \emph{reward function} in RLHF and guide the fine-tuning process in a supervised learning manner with the preference data. 

LLM alignment also intersects with the \emph{Multi-Objective Optimization} (MOO) problem, which involves fine-tuning a model w.r.t. multiple objectives simultaneously~\citep{ji2024beavertails,wu2024fine,zhou2023beyond,rame2024rewarded}.
Production machine learning-based ranking models must strike a careful balance among competing objectives such as relevance, diversity, fairness, and other business-driven goals. This challenge is especially acute in Amazon's All Product Search (APS), where stakeholder priorities often vary across query slices, product categories, and domains. Conventional approaches that supervise ranking models using a single aggregated loss with fixed preference weights struggle to adapt to such diverse and evolving requirements. Updating these weights for each query slice typically involves retraining the model with extensive hyper-parameter optimization (HPO), a process that is both time- and resource-intensive. A common scenario involves starting from a pre-trained base ranker—trained on shared, global objectives across locales and segments—and needing to align it with additional, slice-specific objectives provided by partner teams. Efficiently fine-tuning the base model to incorporate these localized desirability signals without full retraining and without significantly detracting the model's performance on the main objectives remains a key challenge in scalable multi-objective ranking. This specific scenario is termed the \emph{Multi-Objective Fine-Tuning} (MOFT) problem.
As auxiliary objectives may conflict with each other, the notion of alignment is generalized to achieving \emph{Pareto optimality} in the MOFT setting, where the goal is to profile the \emph{Pareto front}, representing a spectrum of trade-off solutions where no single auxiliary objective can be improved without compromising another.
 
In this work, we address the MOFT task in a broad context through our proposed COS-DPO framework. This conditioned one-shot multi-objective fine-tuning framework is designed to (1) generalize DPO to the MOFT setting, (2) profile the Pareto front of the auxiliary objectives while maintaining the model performance on the main objectives with an efficient one-shot training process, and (3) offer flexible post-training controls over the trade-offs. Our codebase is publicly available at \url{https://github.com/yinuoren/cosdpo}.

\subsection{Contributions}

The main contributions of this work are as follows:
\begin{itemize}[leftmargin=1em, itemsep=0em, topsep=0em]
    \item We propose the COS-DPO method, a conditioned one-shot multi-objective fine-tuning framework that generalizes DPO to the multi-objective setting and profiles the Pareto front through one-shot training.
    \item We test our Weight-COS-DPO method across diverse tasks, including Learning-to-Rank (LTR) fine-tuning and LLM alignment, demonstrating its superior performance to achieve comprehensive Pareto fronts and its efficiency against existing baselines. 
    \item Based on our theoretical findings, we propose a novel Temperature-COS-DPO method that enhances the flexibility of post-training control over the trade-offs between the main and auxiliary objectives.
\end{itemize}

For LLM applications, we also develop a novel \emph{Hyper Prompt Tuning} design as an engineering contribution that translates the continuous vectors into a mask applied to the prefix embedding, conveying the importance weights assigned across auxiliary objectives to the LLM without altering its architecture.

\subsection{Related Works}

\paragraph{LLM Alignment.}

LLM alignment has been a popular topic in the machine learning community. RLHF has been a groundbreaking technique for alignment~\citep{christiano2017deep,schulman2017proximal,ouyang2022training,bai2022training}, which serves as a foundation for training models like GPT-4~\citep{achiam2023gpt}, and several advances have been made in this direction~\citep{dong2024rlhf,bai2022constitutional,lee2023rlaif}. To reduce computational complexity, DPO~\citep{rafailov2024direct} has been proposed as an alternative to RLHF, and further developed in the literature~\citep{pal2024smaug,wu2024beta,gheshlaghi2023general,tang2024generalized,rafailov2024r,zeng2024token,liu2024lipo,song2024preference,zhou2023beyond,guo2024controllable,yang2024rewards}. We refer readers to~\citet{shen2023large,wang2024comprehensive} for comprehensive reviews on LLM alignment.

\paragraph{Multi-Objective Optimization.}
    
MOO has been actively studied in control systems~\citep{gambier2007multi} and economics~\citep{tapia2007applications}. The main focus of the related research is the development of algorithms to profile Pareto fronts efficiently so as to understand the trade-offs between objectives. Traditional methods include the evolutionary algorithms~\citep{zhou2011multiobjective} and Bayesian optimization~\citep{laumanns2002bayesian}. Recently, gradient-based MOO methods have been studied in the machine learning settings~\citep{sener2018multi,lin2019pareto,mahapatra2020multi,liu2021stochastic,ren2024multi}. Hypernetwork-based methods are also explored by a series of works~\citep{navon2020learning,lin2020controllable,chen2022multi,hoang2023improving}.

\paragraph{Learning-to-Rank.} LTR~\citep{liu2009learning} tasks differ from traditional supervised learning in that they do not associate each sample with a simple label; instead, an optimal order of items within a group to maximize metrics, \emph{e.g.}, Normalized Discount Cumulative Gain (NDCG)~\citep{jarvelin2002cumulated, wang2013theoretical}. Typically, LTR models score documents and rank them thereby. To bridge LTR with supervised learning, various differentiable losses have been proposed as the proxy to these metrics~\citep{burges2006learning,taylor2008softrank,cao2007learning,qin2021neural, swezey2021pirank}.
In the context of Multi-Objective LTR, existing work includes label aggregation~\citep{dai2011multi, carmel2020multi}, loss aggregation~\citep{hu2018collaborative, mahapatra2023multi, mahapatra2023querywise, tang2024multi}, and hypernetwork~\citep{chen2023controllable}. 

\section{Preliminaries}

In this section, we briefly recapitulate the proximal and direct preference optimization frameworks for fine-tuning LLMs with preference data, and their generalization to listwise preference optimization with ranking data. We will also review the MOO problem in machine learning settings and then introduce the focus of this work, the Multi-Objective Fine-Tuning (MOFT) problem.

\subsection{Proximal and Direct Preference Optimization}

Suppose we have a base model $p_0(\vy|\vx)$, with $\vx$ and $\vy$ being the context and proposal, respectively, and $p_0(\vy|\vx)$ the probability of generating $\vy$ given $\vx$. The goal of DPO is to fine-tune the model $p_0(\vy|\vx)$ with preference data 
\begin{equation}
    \gD_{\mathrm{DPO}} = \{(\vx^{(k)}, \vy_1^{(k)} \succ \vy_2^{(k)})\}_{k\in[N]},
    \label{eq:dpo_dataset}
\end{equation}
where $\vy_1^{(k)} \succ \vy_2^{(k)}$ denotes $\vy_1^{(k)}$ is preferred over $\vy_2^{(k)}$ in the context of $\vx^{(k)}$. 

\paragraph{Proximal Preference Optimization.} In PPO~\citep{schulman2017proximal} or RLHF~\citep{christiano2017deep}, one first models the preference data by the \emph{Bradley-Terry-Luce (BTL) model}~\citep{bradley1952rank}:
\begin{equation}
    \P(y_1 \succ y_2 | \vx) 
    = \sigma\left(r(y_1 | \vx) - r(y_2 | \vx)\right),
\label{eq:btl}
\end{equation}
where $r(y|\vx)$ is the reward function and $\sigma$ is the sigmoid function. PPO is carried out in two steps:
(1) parametrizing $r(y|\vx)$ by a neural network $r_\phi(y|\vx)$, where parameters $\phi$ are trained by maximizing log-likelihood:
\begin{equation}
    - \Ls(r_\phi; \gD_{\mathrm{DPO}}) 
    = \E \left[\log \sigma(r_\phi(y_1 | \vx) - r_\phi(y_2 | \vx) )\right],
    \label{eq:ppo_1}
\end{equation}
and (2) fine-tuning the base model $p_0(y|\vx)$ by maximizing the expected reward while maintaining the KL divergence proximity from the base model:
\begin{equation}
    -\Ls(p_\theta; p_0, r_\phi, \beta) 
    =  \E\left[r_\phi(y | \vx) \right] - \beta \KL(p_\theta|| p_0), 
    \label{eq:ppo_2}
\end{equation}
where $\beta > 0$ is called the \emph{temperature} parameter.

\paragraph{Direct Preference Optimization.} The observation that motivates DPO~\citep{rafailov2024direct} is that the reward function $r_\phi(\vx, y)$ in~\eqref{eq:ppo_2} can be solved explicitly by letting $r_\theta(y | \vx) = \beta \log \frac{p_\theta(y|\vx)}{p_0(y|\vx)}$, and therefore, the training process can be simplified to a one-shot logistic regression:
\begin{equation}
    \begin{aligned}
        &-\Ls_{\rm DPO}(p_\theta ; p_0, \beta, \gD_{\mathrm{DPO}}) \\
        =&\E\left[ \log \sigma\left( \beta \left( \log \tfrac{p_\theta(y_1|\vx)}{p_0(y_1|\vx)} - \log \tfrac{p_\theta(y_2|\vx)}{p_0(y_2|\vx)} \right)\right)\right].
    \end{aligned}
    \label{eq:dpo}
\end{equation}
For completeness, we provide the proofs of the claims above in \cref{app:reparam}.

\subsection{Learning-to-Rank and Listwise Preference Optimization}

In LTR tasks, we are given a ranking dataset 
$$
    \gD_{\rm LTR} = \{(\vx^{(k)}, \vy_1^{(k)} \succ \cdots \succ \vy_n^{(k)})\}_{k\in[N]}, 
$$ 
where $\vy_1^{(k)} \succ \cdots \succ \vy_n^{(k)}$ denotes the ranking of the proposals in the context of $\vx^{(k)}$. As the listwise counterpart of the BTL model, the \emph{Plackett-Luce} (PL) model~\citep{plackett1975analysis} postulates that the probability of the ranking $\vpi$ is given by:
\begin{equation}
    \P(\vy_{\pi_1} \succ \cdots \succ \vy_{\pi_n} | \vx) = \prod_{i=1}^{n} \frac{e^{s(\vy_{\pi_i}| \vx)}}{\sum_{k=i}^{n} e^{s(\vy_{\pi_k}| \vx)}},
\label{eq:pl}
\end{equation}
with $s(\vy|\vx)$ being a score function, and thus the top-one probability is given by the $\softmax$ function:
$$
    \P(\vy_{i} \succ \vy_{i'},\ \forall i'\neq i| \vx) = \frac{e^{s(\vy_i| \vx)}}{\sum_{i'=1}^{n} e^{s(\vy_{i'}| \vx)}}.
$$

In many scenarios, the ranking in $\gD_{\rm LTR}$ is given by a label vector $\vz$, with $z_1 \geq \cdots \geq z_n$, indicating the preference tendency of proposals. The goal is to learn the score $s_\theta(\vy|\vx)$ parameterized by a neural network with parameters $\theta$.
One of the most popular loss functions is the \emph{ListNet} loss~\citep{cao2007learning}, which aligns an appropriate normalized version $\overline\vz$ of the labels $\vz$ with the top-one probabilities:
\begin{equation}
    -\Ls_{\rm LN}(s_\theta; \gD_{\mathrm{LTR}}) 
    = \E \left[\sum_{i=1}^{n} \overline z_{i} \log \tfrac{e^{s_\theta(\vy_i| \vx)}}{\sum_{i'=1}^{n} e^{s_\theta(\vy_{i'}| \vx)}}\right].
    \label{eq:listnet}
\end{equation}
Common choices include the $\softmax$ function for dense labels and $L_1$ normalization for sparse labels, corresponding to different modeling of the ranking data.

Similarly, the DPO framework can also be generalized from preference to ranking datasets. Suppose the base model is given in the form of a score function $s_0(\vy|\vx)$, the \emph{listwise preference optimization} (LiPO)~\citet{liu2024lipo} proposes the following loss function to obtain a fine-tuned model $s_\theta(\vy|\vx)$:
\begin{equation}
    \begin{aligned}
        & -\Ls_{\mathrm{LiPO}}(s_\theta ; s_0, \beta, \gD_{\mathrm{LTR}})\\ 
        =& \E \left[
        \sum_{i=1}^{n} \overline z_{i} \log \frac{e^{\beta (s_{\theta}(\vy_i| \vx)-s_0(\vy_i| \vx))}}{\sum_{i'=1}^{n} e^{\beta( s_{\theta}(\vy_{i'}| \vx)-s_0(\vy_{i'}| \vx))}}\right].
    \end{aligned}
    \label{eq:listnet_dpo}
\end{equation}
A justification for this loss is provided in \cref{app:reparam}. One should notice that when adopting the $L_1$ normalization, the ListNet loss~\eqref{eq:listnet} applied to the preference dataset $\gD_{\rm DPO}$ in the form of binary labels is equivalent to the DPO loss~\eqref{eq:dpo}.

\subsection{Multi-Objective Optimization}

MOO considers an optimization problem with multiple objectives $\min_{\theta \in \Theta} \bm\Ls(\theta) =  (\Ls_1(\theta), \ldots, \Ls_m(\theta))$,
where $\Theta$ is the feasible region. The goal is to profile the Pareto front, the set of trade-off solutions that cannot be improved in one objective without worsening another, or formally, the set of all $\theta$ such that for all $\theta' \in \Theta$, (1) $\Ls_i(\theta) \leq \Ls_i(\theta')$ for all $i\in[m]$, and (2) $\Ls_j(\theta) < \Ls_j(\theta')$ for some $j\in[m]$. 
This concept is motivated by the possible conflicts between objectives, and one may observe the trade-offs from the Pareto front and make informed decisions accordingly.

For many machine learning applications, the MOO problem can be formulated as follows. Given a dataset 
$$
    \gD_{\mathrm{MOO}} 
    = \{\gD_{\mathrm{MOO}}^j\}_{j\in[m]}  
    = \{\{\vy^{(k)}, z^{j, (k)}\}_{k\in[N]}\}_{j\in[m]},
$$
where $\vy^{(k)}$ is the feature vector and $z^{j, (k)}$ is the $j$-th label of the $k$-th data point, one learns a model $f_\theta(\vy)$ optimizing:
\begin{equation}
    \min_{\theta \in \Theta} \bm\Ls(f_\theta; \gD_{\mathrm{MOO}}) 
    := (\Ls_j(f_\theta; \gD_{\mathrm{MOO}}^j))_{j\in[m]},
    \label{eq:moo_ml}  
\end{equation}
where $\Ls_j(f_\theta; \gD_{\mathrm{MOO}}^j)$ is the loss function for the model $f_\theta$ w.r.t. the $j$-th objective, and the feasible region $\Theta$ is over all possible model parameters.

\subsection{Multi-Objective Fine-Tuning}

We now introduce the MOFT problem as a generalization of the LLM alignment problem to the multi-objective setting with ranking datasets, where the goal is to fine-tune an existing base model $p_0(\vy|\vx)$ (or in the form of scores $s_0(\vy|\vx)$) w.r.t. multiple \emph{auxiliary} objectives simultaneously while maintaining its performance on the \emph{main} objective(s) the base model was optimized for. Similar settings are studied by multiple concurrent works~\citep{wang2024arithmetic,mukherjee2024multi,wang2024conditional,guo2024controllable}.

In this work, we formulate the MOFT problem as follows. Given a set of item groups, each of which contains a list of items and corresponding labels w.r.t. $m$ different objectives. The dataset is of the form $\gD_{\mathrm{MOFT}} = \{\gD_{\mathrm{MOFT}}^j\}_{j\in[m]}$, with
\begin{equation}
    \gD_{\mathrm{MOFT}}^j = \left\{\vx^{(k)},(\vy_i^{(k)})_{i\in[n^{(k)}]}, (z_i^{j,(k)})_{i\in[n^{(k)}]}\right\}_{k\in[N]},
    \label{eq:ltr_dataset}
\end{equation}
where $n^{(k)}$ is the number of items, $\vx^{(k)} \in \R^D$ the context, $\vy_i^{(k)} \in \R^d$ the feature vector of the $i$-th item, and $z_i^{j,(k)} \in \R^{n^{(k)}}$ the preference tendency of the $i$-th item w.r.t. the $j$-th aspect, in the $k$-th item group.

\paragraph{Relation to LLM Alignment.} The preference dataset $\gD_{\rm DPO}$~\eqref{eq:dpo_dataset} in LLM alignment can be viewed as a special case of the MOFT problem, where $m = 1$, $n^{(k)} \equiv 2$, and the label $z_i^{1,(k)}$ is binary, being $1$ if the $i$-th item is preferred over the other, and $0$ otherwise. 

\paragraph{Relation to MOO.} MOFT is a generalization of the MOO problem~\eqref{eq:moo_ml} to the fine-tuning setting, where we aim to obtain all possible trade-offs of aligning the base model $p_0(\vy|\vx)$ (or $s_0(\vy|\vx)$) to $m$ additional datasets $\gD_{\mathrm{MOFT}}^j$.

\paragraph{Relation to LTR.} 
When $m=1$, the MOFT problem reduces to the task of fine-tuning a LTR model by viewing $\gD_{\mathrm{MOFT}}$ as an additional listwise ranking dataset. This setting will be further discussed in \cref{sec:ltr} as we apply the COS-DPO framework to this task. We refer to~\citet{liu2024lipo,song2024preference} for more discussions on LLM alignment with listwise data.

We thus aim to design a framework as a versatile solution to the MOFT problem that can not only address the LTR fine-tuning, LLM alignment, and MOO tasks simultaneously but also synergize the state-of-the-art practices in each of these areas to achieve the best performance.

\section{Methodology}

In this section, we present the COS-DPO framework, a conditioned one-shot multi-objective fine-tuning framework that generalizes the DPO framework for LLM alignment to the MOFT setting and profiles the Pareto front. Below, we consider the following MOFT problem:
\begin{equation}
    \min_{\theta \in \Theta} \bm\Ls_{\rm LiPO}(s_\theta ; s_0, \vbeta, \gD_{\mathrm{MOFT}}),
    \label{eq:moft_listnet}
\end{equation}
where the vector of the loss functions $\bm\Ls_{\rm LiPO}$ consists of the LiPO loss functions $(\Ls_{\rm LiPO}(s_\theta; s_0, \beta_j, \gD_{\mathrm{MOFT}}^j))_{j\in[m]}$~\eqref{eq:listnet_dpo} for each auxiliary objective, and $\vbeta = (\beta_1, \ldots, \beta_m)$ is the vector of temperatures that control the trade-off between the main objective and each auxiliary objective.

\paragraph{Linear Scalarization.} The most straightforward way to solve this MOO problem is to train the model $s_\theta$ with a linear combination of the preference data~\citep{zhou2023beyond}:
\begin{equation*}
    \Ls_{\vw}(s_\theta; s_0, \vbeta, \gD_{\mathrm{MOFT}}) 
    = \vw^\top \bm\Ls_{\rm LiPO}(s_\theta ; s_0, \vbeta, \gD_{\mathrm{MOFT}}),
\end{equation*}
where $\vw = (w_1, \ldots, w_m)^\top \in \Delta^m$ is the weight vector that reflects the importance we assign over the objectives, and with $\Delta^m$ being the $m$-dimensional probability simplex. 

As $\vw$ iterates over $\Delta^m$, the model $s_\theta$ will be optimized over a specific trade-off between the main objective and the auxiliary objectives and possibly land on the Pareto front. This approach is able to obtain the complete Pareto front when it is convex~\citep{jakob2014pareto}.

\paragraph{Conditioned One-Shot Networks.}

\begin{figure*}
    \centering
    \begin{subfigure}[b]{.46\textwidth} 
        \centering
        \begin{tikzpicture}
            % Draw transparent grey grid aligned to axes
            \foreach \x in {0,1,...,5} {
            \draw[gray!40, opacity=0.5, thin] (\x,0) -- (\x,5);
            }
            \foreach \y in {0,1,...,4} {
            \draw[gray!40, opacity=0.5, thin] (0,\y) -- (6,\y);
            }
            \node (x) at (6, 0) {}; 
            \node at (5.7, 0.4) {Aux. Obj. 1};
            \node (y) at (0, 5) {};
            \node at (0.7, 5) {Aux. Obj. 2};
        
            \draw[->, thick] (0,0) -- (x);
            \draw[->, thick] (0,0) -- (y);
        
            \draw (1.8, 4) to[bend left=40] node[above, sloped] {PF at $\vbeta$} (5.04,1.5);
            \draw (1.2, 3.5) to[bend left=40] node[above, sloped] {PF at $2\vbeta$} (4.44,1);
            \draw (2.4, 4.5) to[bend left=40] node[above, sloped] {PF at $\vbeta/2$} (5.64,2);
        
            \draw[<->, dashed] (4.44, .8) node[below] {\footnotesize larger} -- node[below, sloped] {$\vbeta$} (6, 2) node[right] {\footnotesize smaller};
        
            \draw[<->, dashed] (1.44, 3) node[left] {\footnotesize $(0,1)$} -- node[above, sloped] {$\vw$} (3.84, 1.1) node[below] {\footnotesize $(1,0)$};
        \end{tikzpicture}
        \caption{Weight-COS-DPO}
        \label{fig:weight_cos}
        \end{subfigure}
        \hspace{-1em}
        \begin{subfigure}[b]{.46\textwidth} 
        \centering
        \begin{tikzpicture}
            % Draw transparent grey grid aligned to axes
            \foreach \x in {0,1,...,5} {
            \draw[gray!40, opacity=0.5, thin] (\x,0) -- (\x,5);
            }
            \foreach \y in {0,1,...,4} {
            \draw[gray!40, opacity=0.5, thin] (0,\y) -- (6,\y);
            }
            \node (x) at (6, 0) {}; 
            \node at (5.7, 0.4) {Aux. Obj. 1};
            \node (y) at (0, 5) {};
            \node at (0.7, 5) {Aux. Obj. 2};
        
            \draw[->, thick] (0,0) -- (x);
            \draw[->, thick] (0,0) -- (y);
        
            \draw (2.4, 4.5) to[bend left=40] node[above, sloped] {PF at $(\beta_1, \beta_2)$} (5.64, 1.8);
            \draw (2.28, 4.4) to[bend left=30] node[midway, xshift=-65pt, yshift=25pt] {PF at $(\beta_1, 2\beta_2)$} (4.44,.7);
            \draw (1.2, 3.4) to[bend left=20] node[midway, xshift=57pt, yshift=-55pt] {PF at $(2\beta_1, \beta_2)$} (5.64,1.7);
            \draw (1.2, 3.3) to[bend left=30] node[sloped, below, xshift=-2pt, yshift=-7pt] {PF at $(2\beta_1, 2\beta_2)$} (4.32,.7);
        \end{tikzpicture}
        \caption{Temperature-COS-DPO}
        \label{fig:temperature_cos}
        \end{subfigure}
        \caption{Conceptual Illustration of Post-Training Controls in the COS-DPO Framework with 2 auxiliary objectives.}
        \label{fig:post_training}
    \end{figure*}

An efficient way to profile the Pareto front of this MOFT problem is to use \emph{hypernetworks}~\citep{navon2020learning,hoang2023improving}, \emph{i.e.}, additional neural networks that generate the model parameters according to the weight vector $\vw$.
As an efficient and robust alternative to hypernetworks,~\citet{ruchte2021scalable} proposes \emph{conditioned one-shot networks} that directly input the weight vector $\vw$ to the model, with successful applications to multiple MOO tasks.

\subsection{Weight-Conditioned Networks}

COS-DPO generalizes the idea of conditioned one-shot networks to the MOFT setting. To be specific, we propose \emph{Weight-Conditioned One-Shot} (Weight-COS) networks $s_{\theta}(\cdot, \vw | \vx)$ that not only take in the data but also condition on the weight $\vw$ over objectives. Intuitively, it formulates the MOFT problem into a ``meta-learning'' problem, and the model $s_\theta(\cdot, \vw | \vx)$ is trained to optimize the objectives over a distribution of weight vectors $\vw$. 

Since $\vw$ is supported on $\Delta^m$, we sample $\vw$ from a Dirichlet distribution $\mathrm{Dir}(\valpha)$ during each epoch of the training process, where $\valpha$ is the concentration parameter. The Weight-COS-DPO method is equivalent to optimize the Weight-COS network $s_{\theta}(\cdot, \vw | \vx)$ w.r.t. the following loss function:
\begin{equation}
    \begin{aligned}
        &\Ls_{\mathrm{W\hyphen COS}}(s_\theta; s_0, \vbeta, \gD_{\mathrm{MOFT}}, \valpha)\\ 
        = &\E_{\vw \sim \mathrm{Dir}(\valpha)}[\Ls_{\vw}(s_\theta(\cdot, \vw | \vx); s_0, \vbeta, \gD_{\mathrm{MOFT}})].
    \end{aligned}
    \label{eq:W-COS_loss}
\end{equation}
Ideally, \emph{i.e.}, when the model has sufficient capacity and the Pareto front is smooth and convex, the optimized model $s_{\theta, \vbeta}(\cdot, \vw | \vx)$ would be Pareto optimal w.r.t. the auxiliary objectives for each $\vw \in \Delta^m$ and thus form the Pareto front.

The Weight-COS-DPO method is summarized in \cref{alg:COS-DPO}.

\IncMargin{1.5em}
\begin{algorithm}[ht]
    \caption{Weight-COS-DPO}
    \label{alg:COS-DPO}
    \Indm
    \KwData{Base model $s_0(\vy | \vx)$, dataset $\gD_{\mathrm{MOFT}}$ concentration parameter $\valpha$, temperature $\vbeta$.}
    \KwResult{Fine-Tuned model $s_{\theta, \vbeta}(\cdot, \cdot | \vx)$.}
    \Indp
    \For{$e = 1$ \KwTo $N_{\rm steps}$}{
        Sample $\vw' \sim \mathrm{Dir}(\valpha)$\;
        $\theta \leftarrow \theta - \eta \nabla_{\theta} \Ls_{\vw}(s_\theta(\cdot, \vw' | \vx); s_0, \vbeta, \gD_{\mathrm{MOFT}});$
    }
\end{algorithm}
\DecMargin{1.5em}

\subsection{Linear Transformation Property}
\label{sec:linear}

Due to the linearity of the Weight-COS-DPO method, we underscore the following linear transformation property:
\begin{proposition}[Linear Transformation Property]
    For any $\vbeta \in \R_+^m$ and $\vw\in\Delta^m$, we denote the model obtained by optimizing the Weight-COS-DPO loss~\eqref{eq:W-COS_loss} with temperature $\vbeta$ as $s_{\theta, \vbeta}(\vy, \vw | \vx)$. 
    
    Then $s_{\theta, \vbeta}(\vy, \vw | \vx)$ should satisfy the following \emph{linear transformation property} that for any $c > 0$, we have
    \begin{equation}
        s_{\theta, c\vbeta}(\vy, \vw | \vx) = \left(1 - \tfrac{1}{c}\right) s_0(\vy | \vx) + \tfrac{1}{c} s_{\theta, \vbeta}(\vy, \vw | \vx)
        \label{eq:linear_transform}
    \end{equation}
    is also an optimal solution to the Weight-COS-DPO loss~\eqref{eq:W-COS_loss} with temperature $c\vbeta$.
    \label{prop:linear}
\end{proposition}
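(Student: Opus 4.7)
The plan is to exploit the observation that the LiPO loss from~\eqref{eq:listnet_dpo} depends on the score function $s_\theta$ and the temperature $\beta_j$ only through the product $\beta_j\bigl(s_\theta(\vy_i|\vx) - s_0(\vy_i|\vx)\bigr)$. Concretely, for each auxiliary objective $j$,
\[
    \Ls_{\mathrm{LiPO}}(s_\theta; s_0, \beta_j, \gD_{\mathrm{MOFT}}^j)
    = \Phi_j\bigl(\beta_j(s_\theta - s_0)\bigr)
\]
for a functional $\Phi_j$ that does not otherwise involve $\beta_j$. First I would verify this invariance structure directly from~\eqref{eq:listnet_dpo}, then lift it to the scalarized loss $\Ls_{\vw}$ and finally to the Dirichlet-averaged loss $\Ls_{\mathrm{W\hyphen COS}}$ in~\eqref{eq:W-COS_loss}, since both aggregations preserve the property pointwise in $(\vx, \vw)$.

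Next I would apply this invariance to the candidate $s_{\theta, c\vbeta}$ defined by~\eqref{eq:linear_transform}. By construction,
\[
    s_{\theta, c\vbeta}(\vy, \vw | \vx) - s_0(\vy|\vx)
    = \tfrac{1}{c}\bigl(s_{\theta, \vbeta}(\vy, \vw|\vx) - s_0(\vy|\vx)\bigr),
\]
so multiplying by the new temperature $c\beta_j$ yields exactly $\beta_j(s_{\theta, \vbeta} - s_0)$ for every $j$. Consequently, for every realization of $\vw \sim \mathrm{Dir}(\valpha)$ and every sample in $\gD_{\mathrm{MOFT}}$, the per-objective LiPO integrands coincide, and therefore
\[
    \Ls_{\mathrm{W\hyphen COS}}(s_{\theta, c\vbeta}; s_0, c\vbeta, \gD_{\mathrm{MOFT}}, \valpha)
    = \Ls_{\mathrm{W\hyphen COS}}(s_{\theta, \vbeta}; s_0, \vbeta, \gD_{\mathrm{MOFT}}, \valpha).
\]

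It remains to argue that matching loss values actually implies optimality at the new temperature. For this I would note that the affine map $T_c : s \mapsto (1-1/c) s_0 + (1/c)\, s$ is a bijection on the hypothesis class (its inverse is $T_{1/c}$), and by the same invariance computation, $T_c$ sends the temperature-$\vbeta$ problem to the temperature-$c\vbeta$ problem while preserving loss values. Hence $T_c$ is a bijection between the two optimal sets, and the image of any optimizer $s_{\theta, \vbeta}$ under $T_c$ is an optimizer at temperature $c\vbeta$, which is exactly the statement of~\eqref{eq:linear_transform}.

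The only real subtlety, and where I would be most careful, is that the argument implicitly assumes the hypothesis class of admissible score functions $s_\theta(\cdot,\vw|\vx)$ is closed under the affine transformation $T_c$, i.e., that the parameterization is rich enough to realize $T_c(s_{\theta,\vbeta})$. This is essentially the same ``sufficient capacity'' assumption already invoked after~\eqref{eq:W-COS_loss}, and in practice is benign for the neural parameterizations used here; I would mention this caveat explicitly and otherwise regard the rest of the proof as a direct calculation with no further obstacles.
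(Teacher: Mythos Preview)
Your proposal is correct and is essentially the same argument as the paper's: both hinge on the observation that the LiPO loss depends on $s_\theta$ and $\beta_j$ only through the product $\beta_j(s_\theta - s_0)$, and both use the affine map $s \mapsto (1-1/c)s_0 + (1/c)s$ (equivalently its inverse $s_\theta = c s'_\theta + (1-c)s_0$, which is how the paper phrases the reparametrization) to identify the optimization problems at temperatures $\vbeta$ and $c\vbeta$. Your explicit framing via the bijection $T_c$ and your remark on closure of the hypothesis class are if anything a bit cleaner than the paper's direct substitution, but the content is the same.
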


The proof of this proposition is provided in \cref{app:linear} and will be empirically validated with experiments as shown in \cref{fig:from}. Powered by \cref{prop:linear}, Weight-COS networks offer post-training controls over the trade-offs between the main and auxiliary objectives. To be specific, once we have trained a model $s_{\theta}(\vy, \vw | \vx)$ with a specific temperature $\vbeta$, we may also obtain the Pareto front under a different temperature $c\vbeta$ by simply scaling the output as: 
\begin{equation}
    s_{\theta}(\vy, \vw | \vx)\leftarrow\left(1 - \tfrac{1}{c}\right) s_0(\vy | \vx) + \tfrac{1}{c} s_{\theta}(\vy, \vw | \vx).
    \label{eq:linear_transform_4}
\end{equation}

Consequently, as illustrated in \cref{fig:weight_cos}, after only one training process with a specific choice of temperature $\vbeta$, Weight-COS-POS allows post-training controls over two kinds of trade-offs: (1) those between the auxiliary objectives by adjusting the weight vector $\vw$, and (2) those between the fidelity to the base model and its performance on the auxiliary objectives by scaling temperature $\vbeta$ with~\eqref{eq:linear_transform_4}.  

\subsection{Temperature-Conditioned Networks}

Although \cref{prop:linear} provides a flexible way to control the trade-offs between the main and auxiliary objectives for the Weight-COS-DPO method, it covers only one specific degree of freedom in the space of the space of the second type of trade-offs.
Generally speaking, the model should exhibit different Pareto fronts for different temperature parameters $\vbeta \in \R^m_+$, and thus one may consider using different temperatures across objectives and also a disproportionate post-training scaling of $\vbeta$ to achieve more flexible control over the Pareto front (\emph{cf.} \cref{fig:temperature_cos} vs. \cref{fig:weight_cos}).

To this end, as a generalized version of the Weight-COS networks, we propose the \emph{Temperature-Conditioned One-Shot} (Temperature-COS) networks by further incorporating the temperature parameter $\vbeta$ into the model input in a similar manner as the weight vector $\vw$, denoted by $s_{\theta}(\vy, \vw, \vbeta | \vx)$. \cref{prop:linear} implies that the temperature $\vbeta \in \R^m_+$ is actually of $m-1$ degrees of freedom, and thus we propose to use the following reparametrization by projecting $\vbeta$ to its $L^1$-normalization $\overline \vbeta = \frac{\vbeta}{\|\vbeta\|_1} \in \Delta^{m}$, \emph{i.e.},
\begin{equation}
    \begin{aligned}
        &s_{\theta}(\vy, \vw, \vbeta | \vx) \\
        =& \big(1 - \tfrac{1}{\|\vbeta\|_1}\big) s_0(\vy|\vx) + \tfrac{1}{\|\vbeta\|_1} s_{\theta}(\vy, \vw, \overline\vbeta|\vx).
    \end{aligned}
    \label{eq:temperature_reparam}
\end{equation}

The training is thus conducted similarly to the Weight-COS-DPO method by randomly sampling $\vbeta \in \R_+^m$ over a certain distribution $\gD_\vbeta$ valued in $\R^m_+$ besides $\vw$ at each epoch. The loss of Temperature-COS-DPO can be written as 
\begin{equation}
    \begin{aligned}
        &\Ls_{\mathrm{T\hyphen COS}}(s_\theta; s_0, \gD_{\mathrm{MOFT}}, \valpha)\\ 
        = &\E_{\vbeta\sim\gD_\vbeta, \vw \sim \mathrm{Dir}(\valpha)}[\Ls_{\vw}(s_\theta(\cdot, \vw, \vbeta | \vx); s_0, \gD_{\mathrm{MOFT}}) ].
    \end{aligned}
    \label{eq:T-COS_loss}
\end{equation}
The Temperature-COS-DPO method is provided in \cref{alg:temp_COS-DPO}. 

\IncMargin{1.5em}
\begin{algorithm}[ht]
    \caption{Temperature-COS-DPO}
    \label{alg:temp_COS-DPO}
    \Indm
    \KwData{Base model $s_0(\vy | \vx)$, dataset $\gD_{\mathrm{MOFT}}$, concentration parameter $\valpha$, temperature distribution $\gD_\vbeta$.}
    \KwResult{Fine-Tuned Model $s_{\theta}(\cdot, \cdot, \cdot | \vx)$.}
    \Indp
    \For{$e = 1$ \KwTo $N_{\rm steps}$}{
        Sample $\vw' \sim \mathrm{Dir}(\valpha)$, $\vbeta' \sim \gD_\vbeta$\;
        $\theta \leftarrow \theta - \eta \nabla_{\theta} \Ls_{\vw}(s_\theta(\cdot, \vw', \vbeta' | \vx); s_0, \gD_{\mathrm{MOFT}});$
    }
\end{algorithm}
\DecMargin{1.5em}

We remark that both Weight-COS-DPO and Temperature-COS-DPO can be implemented with penalization terms to foster the exploration of the Pareto front without affecting the validity of the linear transformation property (\emph{cf.} \cref{prop:linear}). We refer to \cref{app:penalization} for more details.

\begin{figure*}[ht]
    \centering
    \begin{subfigure}[t]{.45\textwidth}
        \centering
        \includegraphics[width=\linewidth]{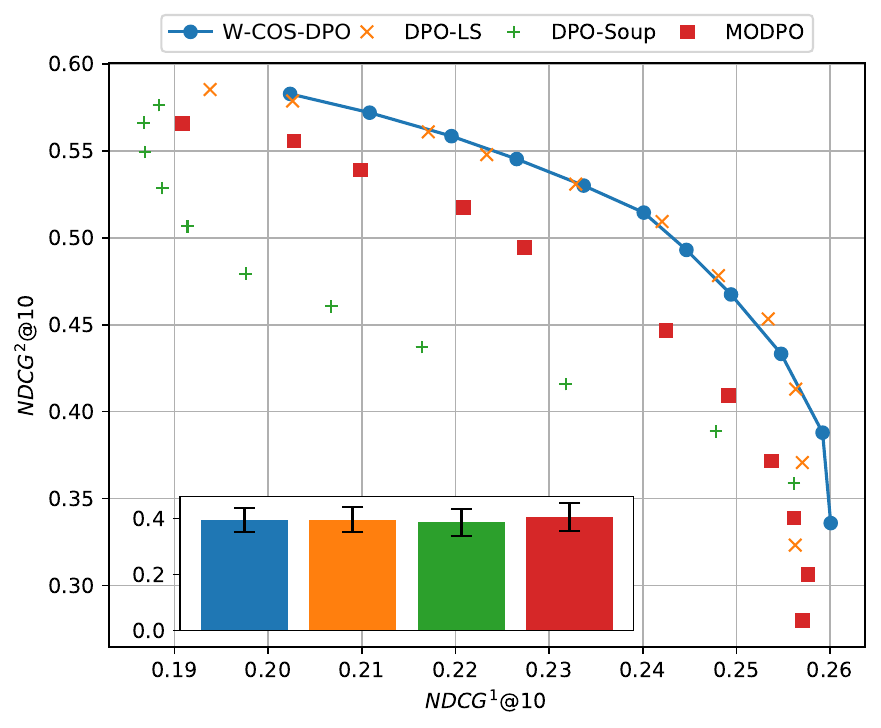}
        \caption{Objective I vs Objective II.}
        \label{fig:ltr_2_a}
    \end{subfigure}
    \hspace{.5em}
    \begin{subfigure}[t]{.45\textwidth}
        \centering
        \includegraphics[width=\linewidth]{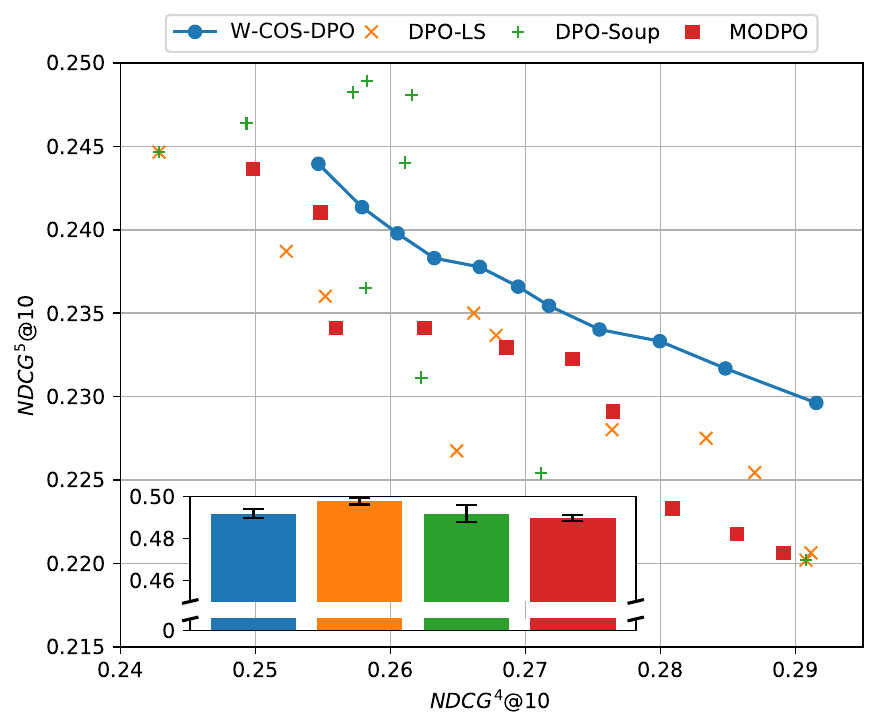}
        \caption{Objective IV vs Objective V.}
        \label{fig:ltr_2_b}
    \end{subfigure}
    \caption{Comparison of Pareto fronts obtained by W-COS-DPO and baselines on the MSLR-WEB10K dataset with 2 auxiliary objectives. Two axes denote the NDCG@10 of the two auxiliary objectives (the higher, the better). The inset plot shows the average NDCG@10 of the main objective, with the error bar denoting the standard deviation across the 11 sampled points.
    }
    \label{fig:ltr_2}
\end{figure*}
\begin{figure*}[ht]
    \centering
    \begin{subfigure}{.43\textwidth}
        \centering
        \includegraphics[width=\linewidth]{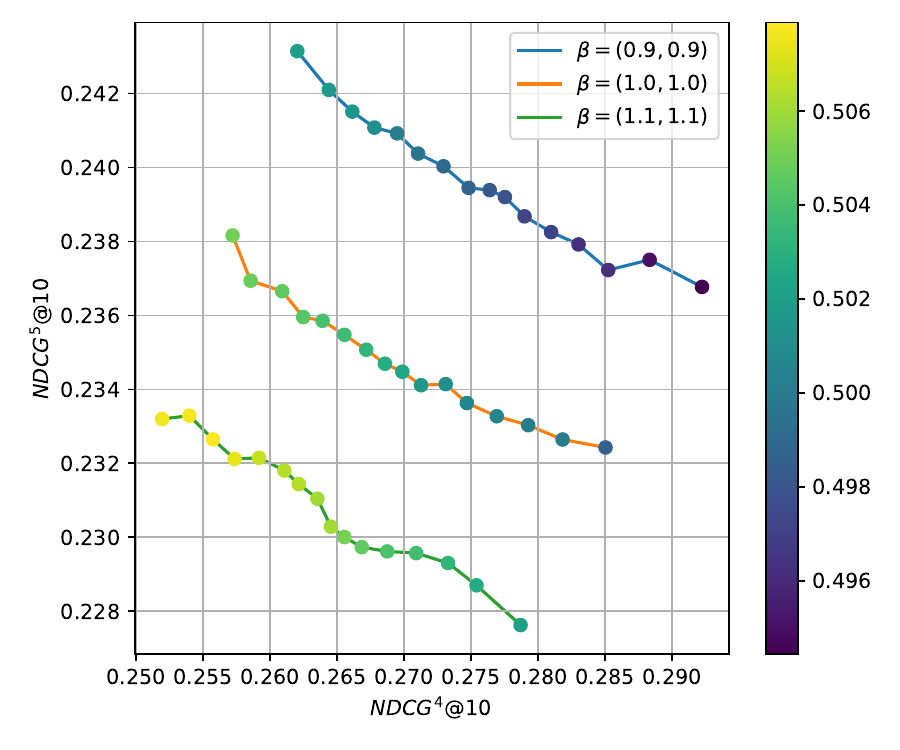}
        \caption{W-COS-DPO.}
        \label{fig:ex_weight}
    \end{subfigure}
    \hspace{1em}
    \begin{subfigure}{.43\textwidth}
        \centering
        \includegraphics[width=\linewidth]{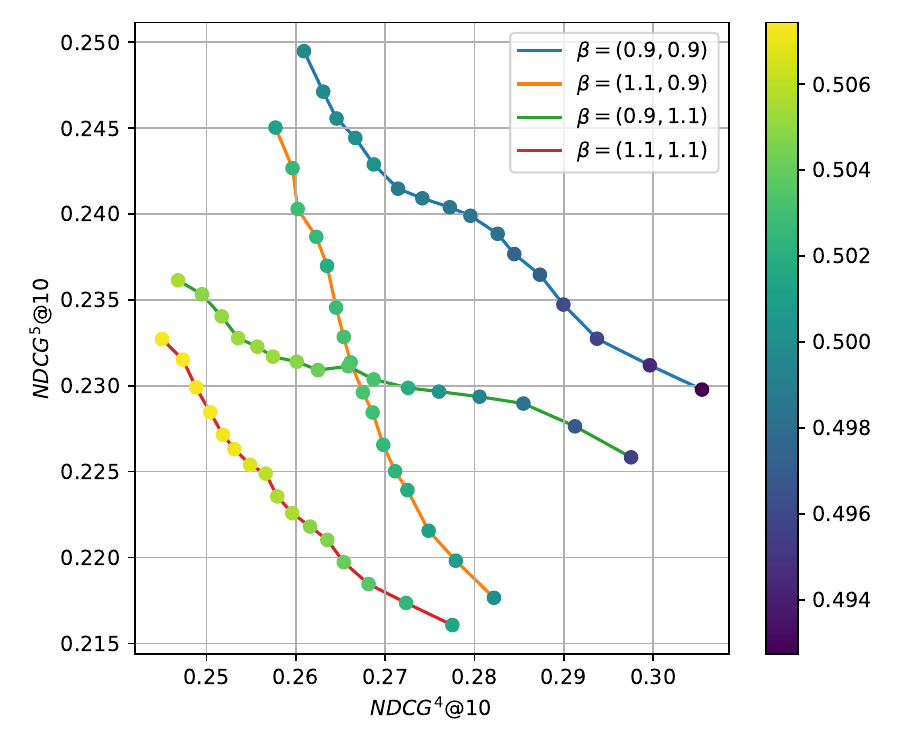}
        \caption{T-COS-DPO.}
        \label{fig:ex_temp}
    \end{subfigure}
    \caption{Examples of post-training control over temperature $\vbeta$ of both W-COS-DPO and T-COS-DPO.}
    \label{fig:ex_post_training}
\end{figure*}

\section{Experiments}

In this section, we provide the detailed experiment design and results of the COS-DPO framework for different applications, including the LTR fine-tuning and LLM alignment task. We compare the Weight-COS-DPO (W-COS-DPO in below) method with the following existing baselines, including the DPO Linear Scalarization (DPO-LS) method, the DPO Soup method~\citep{rame2024rewarded}, and the MO-DPO method~\citep{zhou2023beyond}. For details and further discussion of these baselines, we refer to \cref{app:baselines}. For each baseline, we use the same number of weight vectors $\vw$ for a fair comparison.  
The \emph{Hypervolume (HV)} indicator~\citep{zitzler2004indicator} is adopted for evaluating the performance of MOFT methods (\emph{cf.} \cref{app:hypervolume}). We also carry out preliminary experiments on the proposed Temperature-COS-DPO (T-COS-DPO in below) method on the LTR fine-tuning task to demonstrate its feasibility.

\subsection{Learning-to-Rank Fine-Tuning}
\label{sec:ltr}

We first test the COS-DPO framework on the task of fine-tuning LTR models. We adopt the Microsoft Learning-to-Rank Web Search (MSLR-WEB10K) dataset~\citep{qin2013introducing} for the LTR task.
The MSLR-WEB10K dataset consists of 10,000 groups ($N = 10^4$), with 5 auxiliary objectives ($m = 5$). We refer to \cref{app:datasets} for more details.

In MSLR-WEB10K dataset, the information $\vx$ has been incorporated into the feature vectors $\vy$ by upstream data processing. We thus use a 2-layer transformer architecture of hidden dimension 128 for the base model $s_0(\vy)$, and the model $s_\theta(\cdot, \vw)$ is designed as a 2-layer transformer architecture of hidden dimension 64 with $\vw$ concatenated to the input of the first layer.

We first apply W-COS-DPO to the case $m=2$ for better visualization. \cref{fig:ltr_2_a} presents the Pareto front of two sparse labels ($\overline \vz = \vz/\|\vz\|_1$ in~\eqref{eq:listnet_dpo}) with a convex Pareto front, while \cref{fig:ltr_2_b} presents that of two dense labels ($\overline \vz = \mathrm{softmax}(\vz)$ in~\eqref{eq:listnet_dpo}) with an ill-posed Pareto front. 
W-COS-DPO obtains comprehensive Pareto fronts that dominate those of the baselines in both pairs. Notably, our method obtains a smooth Pareto front in \cref{fig:ltr_2_b} while other baselines fail.
With a common temperature parameter $\vbeta$ used across all methods, the inset plots demonstrate that the superior performance of our method is not at the cost of the main objective, as the NDCG@10 of the main objective is comparable or even slightly better to baselines.

\begin{table}[h]
    \small
    \centering
    \setlength{\tabcolsep}{2pt} 
    \begin{tabular}{c|c c c c}
        \toprule
        \bf Metric  & \bf DPO-LS & \bf DPO Soup & \bf MO-DPO & \bf W-COS-DPO \\
        \midrule
        Aux. HV & 1.648e-3 & 1.468e-3 & 1.263e-3 & \bf 2.039e-3 \\
        Avg. M. Scr.  & 0.355  & 0.382  & 0.360  & \bf 0.432  \\
        (w/$\pm$Std)  & ($\pm$ 0.029) & ($\pm$ 0.032) & ($\pm$ 0.024) & ($\pm$ 0.028) \\
        Time (s) & 14649.15 & 6061.69 & 27059.70 & \bf 4043.47 \\
        \# Params. & 551,232 & 250,615 & 801,792 & \bf 50,432 \\
        \bottomrule
    \end{tabular}
    \caption{HV metric and training time of COS-DPO and the baselines on the MSLR-WEB10K dataset with 5 auxiliary objectives. The reference point is set to $(0,0)$. 11 points are produced for computing HV. Avg. M. Scr. (w/$\pm$Std) refers to the average NDCG@10 (with standard deviation) of the main objective across the 11 points.}
    \label{tab:ltr_5}
\end{table}

We also test W-COS-DPO on a more complicated case where we have 5 auxiliary objectives ($m=5$), as shown in \cref{tab:ltr_5}. Our results demonstrate W-COS-DPO achieves a higher hypervolume metric with significantly less training time and number of parameters compared to the baselines and comparably good preservation of the performance on the main objective (see also \cref{app:time}).
While the computational cost of DPO-LS and MO-DPO, grows exponentially with the number of objectives, our method maintains a linear growth with almost intact performance, indicating the efficiency of our W-COS-DPO method in handling high-dimensional MOFT problems in the LTR task.

\begin{figure*}[ht]
    \centering
    \begin{tikzpicture}[scale=0.9, transform shape]
        \node[rectangle, draw, fit={(-.6,0) (-.5,1.35)}] (input) {};
        \node[above of=input, yshift=2] {Input};
        \node[left of=input, xshift=12, align=center] {\color{gray} \it How\\ \color{gray} \it do\\ \color{gray} \it I\\ \color{gray} \it $\vdots$};

        \node[rectangle, rounded corners=.5cm, draw, fit={(.8,0) (2.9,1.35)}, fill=yellow!70!white, fill opacity=0.3, text opacity=1] (embedding) {Token\\ Embedding};
        \node[circle, draw, plus, minimum size=.45cm, inner sep=0pt] (oplus) at (4.2, .65) {};
        \node[circle, draw, sin wave, minimum size=.45cm, inner sep=0pt] (osin) at (4.2, 1.3) {};
        \node[above of=osin, yshift=-8, xshift=-3, align=center] (pos encoding) {Positional\\ Encoding};

        \node[rectangle, draw, fit={(1.8,-2.5) (4.4,-.8)}, dashed] (transformer) {};
        \node[below of=transformer, yshift=-7] {$\times n$};
        \node[rectangle, rounded corners=.5cm, draw, fit={(2,-2.3) (4.2,-1)}, fill=yellow!70!white, fill opacity=0.3, text opacity=1] (transformer block) {Transformer Blocks};

        \node[rectangle, draw, fit={(0,-2.75) (1.2,-.55)}] (output) {};
        \node[rectangle, draw, fit={(0,-1) (1.2,-.55)}, fill=gray, fill opacity=0.8, text opacity=1] (output prefix) {};
        \node at (.6, -2.0) {Output};

        \node[rectangle, rounded corners=.5cm, draw, fit={(-1.8,-2.7) (-.6,-1.6)}, fill=yellow!70!white, fill opacity=0.3, text opacity=1] (lm head) {LM Head};
        \node[above of=lm head, yshift=18, xshift=-5] (discard) {(Discard)};

        \node[rectangle, draw, fit={(5.4,-1.6) (6.6,.6)}] (embed) {};
        \node[above of=embed, yshift=13] {Embedding};

        \draw[->] (input) -- (embedding);
        \draw[->] (embedding) -- (oplus);
        \draw[->] (osin) -- (oplus);
        \draw[->] (oplus) to[out=-45, in=170] (embed);
        \draw[->] (embed) to[out=-130, in=0] (transformer block);
        \draw[->] (transformer block)--(output);
        \draw[->] (output) -- (lm head);
        \draw[->] (output prefix) -- (discard);

        \node[rectangle, draw, fit={(5.4,0.2) (6.6,.6)}, fill=violet!80!white, fill opacity=0.3, text opacity=1, label=center:Prefix] (prefix) {};
        \node[left of=prefix, xshift=4] {$k$};

        \node[rectangle, draw, fit={(7.8,1.1) (9,1.5)}, fill=red!70!white, fill opacity=0.3, text opacity=1] (prompt) {};
        \node[right of=prompt, xshift=.6cm, align=left] {Prefix\\ Embedding};
        \node[left of=prompt, xshift=4] {$k$};

        \node[rectangle, draw, fit={(9.6,0.2) (10.8,.6)}, fill=blue!50!white, fill opacity=0.3, text opacity=1] (mask) {};
        \node[right of=mask, xshift=.2cm, align=left] {Mask};

        \node[circle, draw, minimum size=.45cm, inner sep=0pt] (otimes) at (8.4, .4) {$\bigodot$};
        
        \draw[->] (prompt) -- (otimes);
        \draw[->] (mask) -- (otimes);
        \draw[->] (otimes) -- (prefix);

        \node[rectangle, draw, fit={(9,-.9) (9.2,-.5)}, fill=blue!70!white, fill opacity=0.3, text opacity=1] (mask1) {};
        \node[above of=mask1, yshift=-14] {$r$};
        \node[left of=mask1, xshift=16] {$k$};
        \node[rectangle, draw, fit={(11.4,-.8) (12.6,-.6)}, fill=blue!70!white, fill opacity=0.3, text opacity=1] (mask2) {};
        \node[right of=mask2, xshift=-4] {$r$};
        \node[circle, draw, cross, minimum size=.45cm, inner sep=0pt] (otimes2) at (10.2, -0.7) {};
        \draw[->] (mask1) -- (otimes2);
        \draw[->] (mask2) -- (otimes2);
        \draw[->] (otimes2) -- (mask);

        \node[rectangle, draw, fit={(10,-2.5) (11.4,-2.4)}, label=center:{$\vw$}] (w) {};
        \node[left of=w, xshift=-13, yshift=0, align=right] {Weight};
        \node[below of=w, yshift=11, xshift=-15, align=left] {\color{gray} \it 50\% Quality, 30\% Safety, 10\% Verbosity, $\cdots$};
        \node[right of=w, xshift=25, align=left] {\color{gray} $(.5, .3, .1, \cdots)$};

        \node[rectangle, rounded corners, draw, fit={(8.5,-1.8) (9.7,-1.4)}, label=center:MLP, fill=yellow!70!white, fill opacity=0.3, text opacity=1] (mlp1) {};
        \node[rectangle, rounded corners, draw, fit={(11.4,-1.8) (12.6,-1.4)}, label=center:MLP, fill=yellow!70!white, fill opacity=0.3, text opacity=1] (mlp2) {};

        \draw[->] (w) -- (mlp1);
        \draw[->] (w) -- (mlp2);
        \draw[->] (mlp1) -- (mask1);
        \draw[->] (mlp2) -- (mask2);

        \draw[dashed] (13.8, -3.3)--(13.8, 1.8)--(5, 1.8)--(5, -.1)--(7.8, -.1)--(7.8, -2)--(6.8, -2)--(6.8, -3.3)--cycle;
        \node at (9.2, 2.1) {\bf Hyper Prompt Tuning};
    \end{tikzpicture}
    \caption{Illustration of the implementation of the W-COS-DPO framework for the LLM alignment task. The proposed {\bf Hyper Prompt Tuning} method, highlighted within the dashed box, transforms the weight vector $\vw$ into a mask and passes it to the LLM via prompt tuning. $k$ denotes the number of virtual tokens, and $r$ is the rank of the weight mask.}
    \label{fig:ICOS_llm}
\end{figure*}

We present an example of the post-training control over the trade-offs between the main and auxiliary objectives by both W-COS-DPO and T-COS-DPO in \cref{fig:ex_post_training}. Detailed experimental settings are deferred to \cref{app:post_training,app:temp}. While the model obtained by W-COS-DPO in \cref{fig:ex_weight} is only able to apply linear translation to the Pareto front, the model obtained by the T-COS-DPO method in \cref{fig:ex_temp} demonstrates its capability of scaling the Pareto front in an objective-specific manner. This is in exact accordance with their expected performance as illustrated in \cref{fig:post_training} and validates the feasibility of the proposed T-COS-DPO method. Our theoretical findings (\cref{prop:linear}) are also confirmed by \cref{fig:temp}, where similar Pareto fronts are obtained by training with different temperatures and scaling the output with~\eqref{eq:linear_transform_4}.

We study the hyperparameter robustness of W-COS-DPO on the LTR fine-tuning task in \cref{app:ablation}. Specifically, we evaluate its sensitivity to the concentration parameter $\valpha$ (\emph{cf.} \cref{app:alpha}) and the model depth (capacity) (\emph{cf.} \cref{app:depth}). Furthermore, we introduce and compare the performance of two different parametrizations of $s_{\theta}(\cdot, \vw | \vx)$ in \cref{app:ppodpo}, namely (a) \emph{Training-from-Scratch} and (b) \emph{Augmentation Network}, which exhibit different trade-offs between the performance and the computational cost and thus may serve different purposes in practice.

\subsection{LLM Alignment Task}
\label{sec:llm}

We then apply the COS-DPO framework to the LLM alignment task. The PKU-SafeRLHF dataset~\citep{ji2024pku} is adopted for experiments, which consists of 83.4k entries, with 2 auxiliary objectives ($m = 2$). We refer to \cref{app:datasets} for more details on the dataset.

In contrast to the LTR task, where we directly concatenate the weight $\vw$ to the input of the W-COS network $s_{\theta}(\cdot, \vw | \vx)$. In LLM alignment, this strategy is generally infeasible, since the input of transformers is tokenized prompts, and the weight vector $\vw$ is a real vector that is not considered in tokenization and thus a direct concatenation may cause the model to fail to generate meaningful responses. To address this issue and incorporate the information of the weight vector $\vw$ into the LLM with the least modification to the model and the training process, we propose a novel design, called \textit{Hyper Prompt Tuning (HPT)}. 

The mechanism of HPT is shown in \cref{fig:ICOS_llm}. Inspired by Prompt Tuning~\citep{lester2021power}, HPT augments the input embedding obtained post token embedding and positional encoding with a trainable prefix embedding block that is controlled by the weight vector $\vw$. Specifically, HPT follows the following steps: (1) HPT takes in a weight vector $\vw \in \Delta^m$ that indicates the importance across auxiliary objectives and, through simple trainable MLPs, produces two matrices, the matrix product of which forms the mask, (2) the mask is multiplied entrywise with a trainable prefix embedding block with $k$ virtual tokens, and (3) the prefix embedding block is then concatenated to the input embedding as a prefix and fed into the transformer blocks of the LLM. In contrast to Multi-Task Prompt Tuning~\citep{wang2023multitask}, which permits only a finite number of tasks, one can pass a continuum of weight by HPT into the LLM, offering both flexibility and versatility. 

We thus perform fine-tuning to the GPT-2 model~\citep{radford2019language} and the Alpaca-7B-Reproduced model~\citep{dai2023safe}, following the practice by~\citet{zhou2023beyond} via Parameter-Efficient Fine-Tining (PEFT) with $\alpha = 8$ and $r = 4$ in the low-rank adaptions (LoRA) to the modules within the model. 
For W-COS-DPO, we adopt the Hyper Prompt Tuning technique with $k = 8$ and $r = 4$.
To ensure a fair comparison, baseline methods will also be augmented with the prompt tuning of $k = 8$ on top of LoRA. Our method is built upon the TRL package~\citep{vonwerra2022trl}, and the implementation of the HPT is compatible with the PEFT package~\citep{peft}, allowing easy integration with existing LLMs.
All the experiments are conducted on 8$\times$ NVIDIA A100 GPUs. 

\begin{table}[h]
    \centering
    \setlength{\tabcolsep}{2.5pt}
    \begin{tabular}{c|c c |c c}
        \toprule
        \multirow{2}{*}{\bf Method} &\multicolumn{2}{c|}{\bf GPT-2} &\multicolumn{2}{c}{\bf Alpaca-7B-Rep.}  \\[.5ex]
         &  \bf HV & \bf Time (s) &  \bf HV& \bf Time (s) \\
        \midrule
        DPO-LS & 0.1767 & 15148.5 & 0.1687 & 94156.1\\
        DPO Soup & 0.1840 & 2755.5 & 0.1427 & 17138.7\\
        \bf W-COS-DPO & \bf 0.1942 & \bf 1396.8 & \bf 0.1689 & \bf 8520.2\\
        \bottomrule
    \end{tabular}
    \caption{HV metric and training time of W-COS-DPO and the baselines on the PKU-SafeRLHF dataset. The reference point is set to $(1.1,1.1)$, and 11 points are produced for computing HV.}
    \label{tab:llm}
\end{table}

\begin{figure*}[ht]
    \centering
    \begin{subfigure}[b]{.45\textwidth}
        \includegraphics[width=\linewidth]{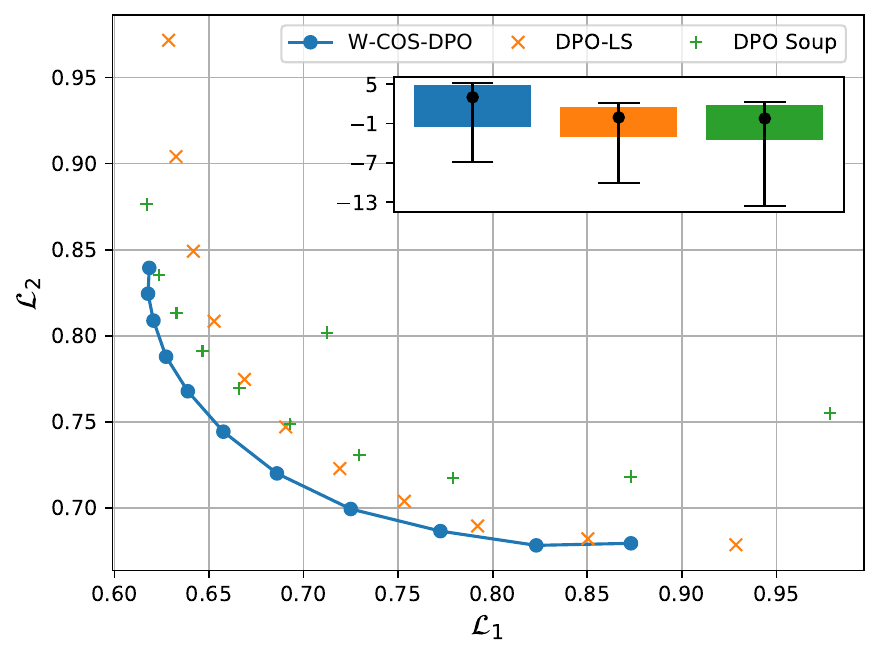}
        \caption{GPT-2}
        \label{fig:llm_gpt2}
    \end{subfigure}
    \hspace{1em}
    \begin{subfigure}[b]{.45\textwidth}
        \includegraphics[width=\linewidth]{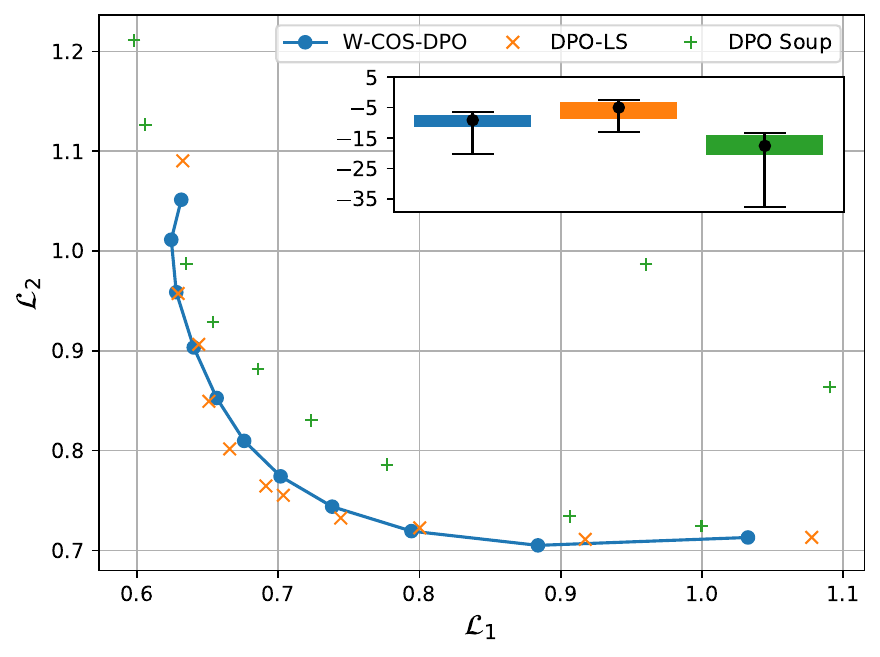}
        \caption{Alpaca-7B-Reproduced}
        \label{fig:llm_llama2}
    \end{subfigure}
    \caption{Comparison of Pareto fronts obtained by W-COS-DPO and baselines on the PKU-SafeRLHF dataset. Two axes denote the expected cross-entropy error of two auxiliary objectives (the lower, the better). The inset plot shows the interquartile range (IQR) of the log-likelihood deviation of the response from the reference model across the test dataset.}
    \label{fig:llm}
\end{figure*}

In this task, we compare the results of W-COS-DPO with those of DPO-LS and DPO Soup and we refer readers to discussions in \cref{app:baselines} for the comparison with MO-DPO.
For all experiments, we have chosen a common temperature $\beta = 0.1$ to balance the trade-offs between the main and auxiliary objectives. W-COS-DPO achieves smooth and comprehensive Pareto fronts (\emph{cf.} \cref{fig:llm}) with higher hypervolume metrics and less training time (\emph{cf.} Table~\ref{tab:llm}) for both LLM architectures compared to the baselines, demonstrating the effectiveness of our method in the large-scale LLM alignment tasks. 
Notably, as W-COS-DPO tackles a intrinsically more challenging ``meta-learning'' problem and thus demands more expressive power, our method is less prone to overfitting and more robust to the choice of the hyperparameters compared to the baselines.
Several studies on the hyperparameters are provided in \cref{app:ablation}.

\section{Discussion}

In this work, we propose the COS-DPO framework for multi-objective fine-tuning, which is inspired by the DPO framework to profile the Pareto front for a wide range of multi-objective fine-tuning problems. Our method enjoys an efficient one-shot training process by conditioning the model on the importance weights $\vw$ across auxiliary objectives (Weight-COS-DPO), and further also on the temperature $\vbeta$ (Temperature-COS-DPO) to achieve the desired trade-offs between the main and auxiliary objectives. 

We demonstrated the effectiveness and efficiency of Weight-COS-DPO in handling high-dimensional MOFT problems in both the LTR fine-tuning and large-scale LLM alignment tasks, displayed the post-training control by the linear transformation property, and empirically validated the feasibility of Temperature-COS-DPO. Our newly proposed Hyper Prompt Tuning technique also provides a novel way to incorporate continuous information into the LLM. 

We expect to incorporate other possible MOO and LTR techniques in the Weight-COS-DPO method and further explore the potential of the Temperature-COS-DPO method in various, more complicated multi-objective fine-tuning problems in future works.

\begin{acknowledgements} 
       This work was conducted during Yinuo Ren's internship at Amazon. We thank Subhajit Sanyal, Yeshwant Dattatreya, and the whole Amazon Search team for their support and constructive feedback. 
\end{acknowledgements}

\bibliography{reference}

\newpage

\onecolumn

\title{COS-DPO: Conditioned One-Shot Multi-Objective Fine-Tuning Framework\\(Supplementary Material)}
\maketitle

\appendix

\section{Additional Experiment Details}
\label{app:exp}

In this section, we present additional details and results of the experiments conducted in the main text, including further descriptions of the baseline implementations, the penalization terms in the pratical implementation of Weight-COS-DPO and Temperature-COS-DPO. We will also provide a brief discussion on the hypervolume metric as the evaluation metric for the multi-objective fine-tuning (MOFT) methods, and detailed descriptions of the datasets used in the experiments. Additional experimental results of post-training control over trade-offs and the Temperature-COS-DPO method are also provided.

\subsection{Baseline Implementations}
\label{app:baselines}

In the following, we will introduce and discuss the baseline methods used in the experiments in detail.

\paragraph{DPO Linear Scalarization (DPO-LS).}

Given the base model $s_0$, for each weight vector $\vw \in \R^m$, the DPO-LS method trains the new model $s_\theta$ with the loss function $\Ls_{\vw}$ and obtain $s_{\theta, \vw}$ defined as 
\begin{equation*}
    \begin{aligned}
        s_{\theta, \vw}  
        =& \argmin_{s_{\theta}} \Ls_{\vw}(s_\theta; s_0, \vbeta, \gD_{\mathrm{MOFT}}) \\
        =& \argmin_{s_{\theta}} \vw^\top \bm\Ls_{\rm LiPO}(s_\theta ; s_0, \vbeta, \gD_{\mathrm{MOFT}}).
    \end{aligned}
\end{equation*}
This model is a na\"ive generalization from the linear scalarization method in the MOO literature to the MOFT problem, and the main drawback is that it needs as many training jobs and models as the number of sampled weight vectors, which is computationally expensive.

\paragraph{DPO Soup.}

The DPO Soup~\citep{rame2024rewarded} model first trains $m$ models $s_{\theta, \ve_i}$ for each unit vector $\ve_i$ in the $m$-dimensional space, \emph{i.e.}, $m$ DPO models w.r.t. the $m$ auxiliary objectives, respectively, and then linearly combines the $m$ models to obtain the final model with the weight vector $\vw$ in the parameter space.

The DPO Soup method offers a more efficient way to combine the models trained with different auxiliary objectives, but it still requires $m$ training jobs and models for each auxiliary objective, and the performance of this model is largely dependent on the landscape of the parameter space of the neural network architecture.

As depicted in \cref{fig:ltr_2}, the Pareto front obtained by the DPO Soup method may present unexpected curves, and \cref{fig:llm} shows that the DPO Soup method may even exhibit mode collapse for certain combinations.

\paragraph{MO-DPO.}

The MO-DPO method also starts with the training of $m$ models $s_{\theta, \ve_i}$ for each unit vector $\ve_i$ in the $m$-dimensional space, and then instead of linearly combining the parameters, MO-DPO conducts a new training job for each weight vector $\vw \in \R^m$ with the following loss function:
\begin{equation*}
    -\Ls_{\mathrm{MO{\text -}DPO}}(s_\theta ; s_0, \vbeta, \gD_{\mathrm{MOFT}}) 
    = \E \left[
    \sum_{i=1}^{n} \overline z_{i}^{j} \log \dfrac{\exp\left( \beta_j r_{\theta, \vw}^{\rm MO{\text-}DPO} \right)}{\sum_{i'=1}^{n} \exp\left(\beta_j r_{\theta, \vw}^{\rm MO{\text-}DPO}\right)}\right],
\end{equation*}
where, for an arbitrary $i \in [m]$, $r_{\theta, \vw}^{\rm MO{\text-}DPO}$ is defined as
\begin{equation}
    r_{\theta, \vw}^{\rm MO{\text-}DPO} := \dfrac{1}{w_i} \left(s_\theta(\vy | \vx) - s_0(\vy | \vx) - \sum_{i' \neq i} w_{i'} \big(s_{\theta, \ve_i'}(\vy | \vx) - s_0(\vy | \vx)\big)\right).
    \label{eq:mo_dpo}
\end{equation}

\begin{wrapfigure}{r}{0.5\textwidth}
    \centering
    \includegraphics[width=.47\textwidth]{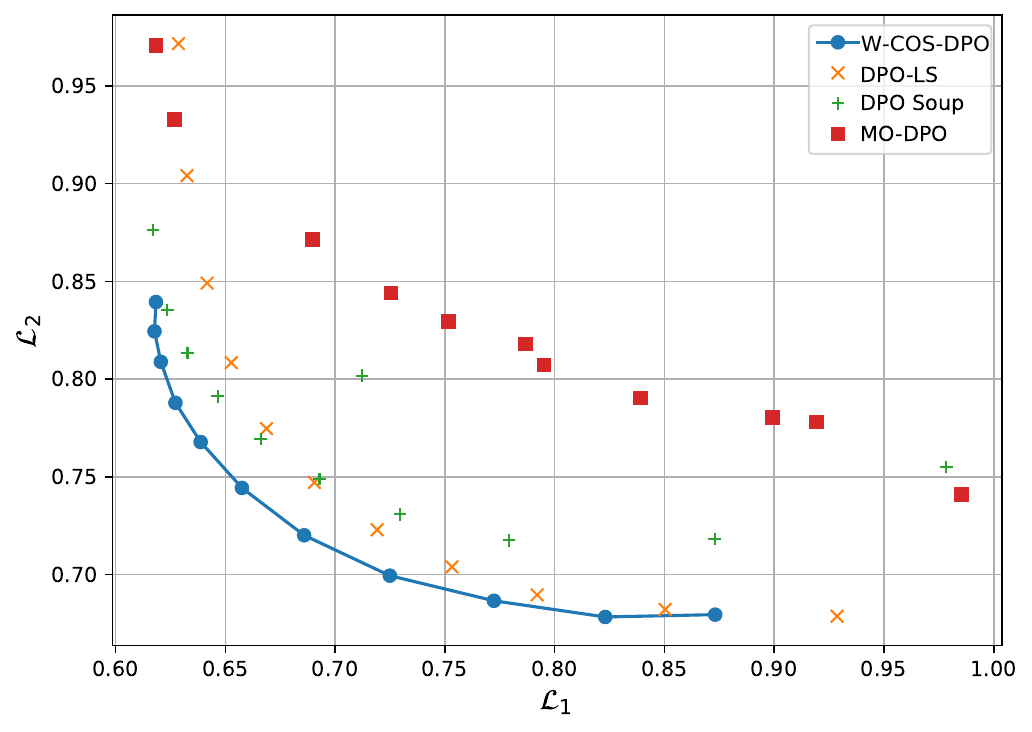}
    \caption{Comparison of Pareto fronts obtained by Weight-COS-DPO and the baselines on the PKU-SafeRLHF dataset with the GPT-2 model, including the MO-DPO method. }
    \label{fig:llm_modpo}
\end{wrapfigure}

As MO-DPO requires $m$ training jobs and one additional training job for each weight vector, it may require more training time and computational resources compared to the DPO-LS and DPO Soup methods.

For the LLM alignment task, we observe MO-DPO suffers from unstable training caused by the $1/w_i$ vector in the expression~\eqref{eq:mo_dpo} especially when $w_i$ is close to zero and exhibits less competitive performance. The results are shown in \cref{fig:llm_modpo}.
We suspect that the conflict between the prompt tuning and the MO-DPO method may lead to the suboptimal performance of MO-DPO in the LLM alignment task and thus do not present the results in the main text (\emph{cf.} \cref{fig:llm}).

The COS-DPO framework is designed to address the limitations of the existing methods and provide a more efficient and effective way to profile the Pareto front of the MOFT problems.

\subsection{Penalization}
\label{app:penalization}

In practice, in order to foster the exploration of the Pareto front, one may also incorporate artificial penalization terms to the loss function, such as the cosine similarity between the loss vector $\bm\Ls(s_\theta; s_0, \vbeta, \gD_{\mathrm{MOFT}})$ of the model and the weight vector~\citep{ruchte2021scalable}:
\begin{equation}
    \begin{aligned}
        &-\gG_{\vw}(s_{\theta}; s_0, \vbeta) \\
        =& \cos \langle\vw, - \bm\Ls_{\rm LiPO}(s_\theta(\cdot, \vw | \vx); s_0, \vbeta, \gD_{\mathrm{MOFT}})\rangle\\
        =& - \dfrac{\vw^\top \bm\Ls_{\rm LiPO}(s_\theta(\cdot, \vw | \vx); s_0, \vbeta, \gD_{\mathrm{MOFT}})}{\|\vw\| \|\bm\Ls_{\rm LiPO}(s_\theta(\cdot, \vw | \vx); s_0, \vbeta, \gD_{\mathrm{MOFT}})\|},
    \end{aligned}
    \label{eq:penalization}
\end{equation}
where $\langle \cdot, \cdot \rangle$ denotes the angle between two vectors.

This penalization term intuitively confines the loss vector $\bm\Ls_{\rm LiPO}$ to converging along the direction of the weight vector $\vw$, empowering possible profiling of concave Pareto fronts~\citep{lin2019pareto}.

We present the practical implementation of the penalization terms in the Weight-COS-DPO and Temperature-COS-DPO methods in the following.

\begin{itemize}
    \item For Weight-COS-DPO, the penalized loss function, modified from~\eqref{eq:W-COS_loss}, is thus defined as
    \begin{equation}
        \begin{aligned}
            &\Ls_{\mathrm{I\hyphen COS}}(s_\theta; s_0, \vbeta, \gD_{\mathrm{MOFT}}, \valpha, \lambda)\\ 
            := &\E_{\vw \sim \mathrm{Dir}(\valpha)}[\Ls_{\vw}(s_\theta(\cdot, \vw | \vx); s_0, \vbeta, \gD_{\mathrm{MOFT}})+ \lambda  \gG_{\vw}(s_\theta(\cdot, \vw | \vx); s_0, \vbeta) ],
        \end{aligned}
        \label{eq:W-COS_loss_penalization}
    \end{equation}
    and the corresponding algorithm is presented in \cref{alg:COS-DPO-pen}. 
    \item For Temperature-COS-DPO, the penalized loss function, modified from~\eqref{eq:T-COS_loss}, is defined as
    \begin{equation*}
        \begin{aligned}
            &\Ls_{\mathrm{T\hyphen COS}}(s_\theta; s_0, \gD_{\mathrm{MOFT}}, \valpha, \lambda)\\ 
            := &\E_{\vbeta\sim\gD(\vbeta), \vw \sim \mathrm{Dir}(\valpha)}[\Ls_{\vw}(s_\theta(\cdot, \vw, \vbeta | \vx); s_0, \gD_{\mathrm{MOFT}}) + \lambda  \gG_{\vw}(s_\theta(\cdot, \vw, \vbeta | \vx); s_0) ],
        \end{aligned}
    \end{equation*}
    and the corresponding algorithm is presented in \cref{alg:temp_COS-DPO-pen}.
\end{itemize}

\IncMargin{1.5em}
\begin{algorithm}[ht]
    \caption{Weight-COS-DPO with Penalization}
    \label{alg:COS-DPO-pen}
    \Indm
    \KwData{Base model $s_0(\vy | \vx)$, dataset $\gD_{\mathrm{MOFT}}$ concentration parameter $\valpha$, temperature $\vbeta$, penalization coefficient $\lambda$(Training); scale $c$, weight vector $\vw$ (Post-Training Control).}
    \KwResult{Fine-Tuned model $s_{\theta, \vbeta}(\cdot, \cdot | \vx)$.}
    \Indp
    \tcp{Training}
    \For{$e = 1$ \KwTo $N_{\rm steps}$}{
        Sample $\vw' \sim \mathrm{Dir}(\valpha)$\;
        $\theta \leftarrow \theta - \eta \nabla_{\theta} [\Ls_{\vw}(s_\theta(\cdot, \vw' | \vx); s_0, \vbeta, \gD_{\mathrm{MOFT}}) + \lambda \gG_{\vw}(s_\theta(\cdot, \vw' | \vx); s_0, \vbeta)];$
    }
    \tcp{Post-Training Control}
    $s_{\theta, c\vbeta}(\vy, \vw | \vx) \leftarrow \left(1 - 1 / c\right) s_{\mathrm{base}}(\vy | \vx) +  s_{\theta}(\vy, \vw | \vx) / c$.
\end{algorithm}
\DecMargin{1.5em}

\IncMargin{1.5em}
\begin{algorithm}[ht]
    \caption{Temperature-COS-DPO with Penalization}
    \label{alg:temp_COS-DPO-pen}
    \Indm
    \KwData{Base model $s_0(\vy | \vx)$, dataset $\gD_{\mathrm{MOFT}}$, concentration parameter $\valpha$, temperature distribution $\gD_\vbeta$, penalization coefficient $\lambda$; temperature $\vbeta$, weight vector $\vw$ (Post-Training Control).}
    \KwResult{Fine-Tuned Model $s_{\theta}(\cdot, \cdot, \cdot | \vx)$.}
    \Indp
    \tcp{Training}
    \For{$e = 1$ \KwTo $N_{\rm steps}$}{
        Sample $\vw' \sim \mathrm{Dir}(\valpha)$, $\vbeta' \sim \gD$\;
        $\theta \leftarrow \theta - \eta \nabla_{\theta} [
                \Ls_{\vw}(s_\theta(\cdot, \vw', \vbeta' | \vx); s_0, \gD_{\mathrm{MOFT}}) + \lambda \gG_{\vw}(s_\theta(\cdot, \vw', \vbeta' | \vx); s_0)];$
    }
    \tcp{Post-Training Control}
    $s_{\theta}(\vy, \vw, \vbeta | \vx) \leftarrow \big(1 - \tfrac{1}{\|\vbeta\|_1}\big) s_0(\vy|\vx) + \tfrac{1}{\|\vbeta\|_1} s_{\theta}\left(\vy, \vw, \tfrac{\vbeta}{\|\vbeta\|_1}|\vx\right)$.
\end{algorithm}
\DecMargin{1.5em}

We have also included the post-training control in both algorithms to scale the output of the model with the temperature $\vbeta$ and the weight vector $\vw$.

\subsection{Hypervolume Metric}
\label{app:hypervolume}

In MOO, the quality of an approximate Pareto front $\hat{\gP}$ is often evaluated using the \emph{hypervolume (HV)} metric, which measures the volume of the region dominated by $\hat{\gP}$ relative to a predefined reference point $\boldsymbol{r}$. This reference point is typically chosen to be a point that is worse than all solutions in the objective space. The hypervolume provides an aggregate measure of performance by capturing how well $\hat{\gP}$ extends toward optimal trade-offs among objectives.

The definition of the hypervolume differs based on whether the objective functions are being maximized or minimized:
\begin{itemize}
    \item For maximization problems, where higher values are preferred, the hypervolume is computed as:

    \begin{equation}
        \mathrm{HV}(\hat \gP, \boldsymbol r) = \int_{\boldsymbol{x} \succeq \boldsymbol{r}} \bm{1}_{\exists \boldsymbol{p} \in \hat{\gP}, \boldsymbol{p} \succeq \boldsymbol{x}} \, \mathrm{d}\boldsymbol{x}.
    \end{equation}
  
    In this case, the hypervolume measures the volume of the region above the reference point $\boldsymbol{r}$ that is dominated by the approximate Pareto front $\hat{\gP}$.
    \item For minimization problems, where lower values are preferred, the hypervolume is defined as:
  
    \begin{equation}
        \mathrm{HV}(\hat \gP, \boldsymbol r) = \int_{\boldsymbol{x} \preceq \boldsymbol{r}} \bm{1}_{\exists \boldsymbol{p} \in \hat{\gP}, \boldsymbol{p} \preceq \boldsymbol{x}} \, \mathrm{d}\boldsymbol{x}.
    \end{equation}
  
    Here, the hypervolume represents the volume of the region below the reference point $\boldsymbol{r}$ that is dominated by $\hat{\gP}$.
\end{itemize}

Intuitively, a larger hypervolume value indicates a better approximation of the true Pareto front $\gP$, as it suggests that $\hat{\gP}$ spans a larger and more favorable region in the objective space, and the optimal Pareto front $\gP$ possesses the maximum hypervolume.

\subsection{Datasets}
\label{app:datasets}

We provide additional details on the datasets used in the experiments.

\paragraph{LTR Fine-Tuning Task.}

In this task, $\vx^{(k)}$ in $\gD_{\rm MOFT}$ denotes a query, and $\vy_i^{(k)}$ denotes the feature vector of the $i$-th document, and $z_i^{j,(k)}$ denotes the score of the $i$-th document w.r.t. the $j$-th aspect. 

The goal of LTR is to provide a ranking $\vpi$ of the documents w.r.t. the scores $z_i^{j,(k)}$ for each query $\vx^{(k)}$, that maximizes the \emph{Normalized Discounted Cumulative Gain (NDCG)}~\citep{wang2013theoretical} metric, defined as
\begin{equation}
    \mathrm{NDCG}^j\mathrm{@k}(\vpi) = \E_{(\vx, \ety, \vz^j)}\left[\frac{\mathrm{DCG@k}(\vpi,\vz^{j})}{\max_{\vpi'} \mathrm{DCG@k}(\vpi',\vz^{j})}\right],
    \label{eq:ndcg}
\end{equation}
where the $\mathrm{DCG@k}$, the discounted cumulative gain for the first $k$ items, is defined as
$$
    \mathrm{DCG@k}(\vpi,\vz^{j}) = \sum_{i=1}^{k} \frac{z_{\pi_i}^{j}}{\log_2(i+1)}.
$$
This metric intuitively measures the quality of the ranking $\vpi$ of the documents w.r.t. the scores $z_i^{j,(k)}$ for each query $\vx^{(k)}$ by assigning higher weights to the top-ranked documents, normalized by the ideal ranking.

We adopt the Microsoft Learning-to-Rank Web Search (MSLR-WEB10K) dataset~\citep{qin2013introducing} for the LTR task.
The MSLR-WEB10K dataset consists of 10,000 groups ($N = 10^4$), each containing a list of webpages retrieved by the search engine in response to the query $\vx^{(k)}$ and the corresponding features extracted from the webpage. Following the practice by~\citet{mahapatra2023querywise}, we treat the first 131 features as the feature vector ($\vy_i^{(k)} \in \R^{131}$). We also identify the relevance label $\in [0:4]$ as the main objective used to train the base model, and the last 5 features, \emph{viz.} (I) Query-URL Click Count, (II) URL Click Count, (III) URL Dwell Time, (IV) Quality Score 1, (V) Quality Score 2, with the relevance label, as 5 different auxiliary objectives ($m=5$) for fine-tuning.  The dataset is split into training (60\%), validation (20\%), and test (20\%) datasets, and all results shown are on the test split. 

We first train w.r.t. the relevance label sufficiently and treat it as our base model $s_0(\vy)$.

\paragraph{LLM Alignment Task.} 

In this task, $\vx^{(k)}$ in $\gD_{\rm MOFT}$ denotes a prompt, and $\vy_i^{(k)}$ denotes the response generated by the LLM, and $z_i^{j,(k)}$ denotes the score of the $i$-th response w.r.t. the $j$-th aspect. The goal is to align the LLM to generate responses that satisfy the auxiliary objectives (\emph{e.g.}, verboseness, harmlessness, \emph{etc.}) while maintaining its performance on general tasks (\emph{e.g.}, fluency, relevance, \emph{etc.}).

We adopt the PKU-SafeRLHF dataset~\citep{ji2024pku} for experiments, which consists of 83,400 entries, each containing a prompt and a pair of responses ($n = 2$) annotated with preferences w.r.t. both harmlessness and helpfulness ($m = 2$). When the $k$-th response is annotated as more helpful, we assign $z_1^{(k)} = 1$; otherwise, $z_1^{(k)} = 0$. Similarly, when the $k$-th response is annotated as more harmless, we assign $z_2^{(k)} = 1$; otherwise, $z_2^{(k)} = 0$.
The goal is to fine-tune the model to generate responses that are both harmless and helpful as a multi-objective optimization problem. 

\subsection{Remark on Training Time}
\label{app:time}

The training time in Table~\ref{tab:ltr_5} refers to the duration of all training jobs required for computing the 11-point Pareto front. As described in \cref{alg:COS-DPO} or its penalized version \cref{alg:COS-DPO-pen}, in each epoch during the Weight-COS-DPO training, we first sample a single weight vector $\vw$ and then compute the loss $\Ls_{\mathrm{W\hyphen COS}}$ and back-propagate the gradients. Therefore, the training does not introduce additional computational cost compared to the training w.r.t. a single objective. 

However, Weight-COS-DPO may require more training epochs to converge due to the exploration of the Pareto front. In practice, we find that the Weight-COS-DPO framework converges rapidly, and the training time may only be slightly longer than that of a single model training.

\subsection{Hyperparameter Studies}
\label{app:ablation}

In this section, we provide the studies of the hyperparameters of the COS-DPO framework, including the sensitivity of the concentration parameter $\valpha$ in the Dirichlet distribution, the depth of the model, and the performance of two different NN parametrizations $s_{\theta, \vw, \vbeta}(\cdot, \cdot | \vx)$, namely training-from-scratch and augmentation network.

\begin{figure*}[!p]
    \begin{subfigure}{\textwidth}
        \centering
        \begin{minipage}{.57\textwidth}
            \centering
            \includegraphics[width=.8\linewidth]{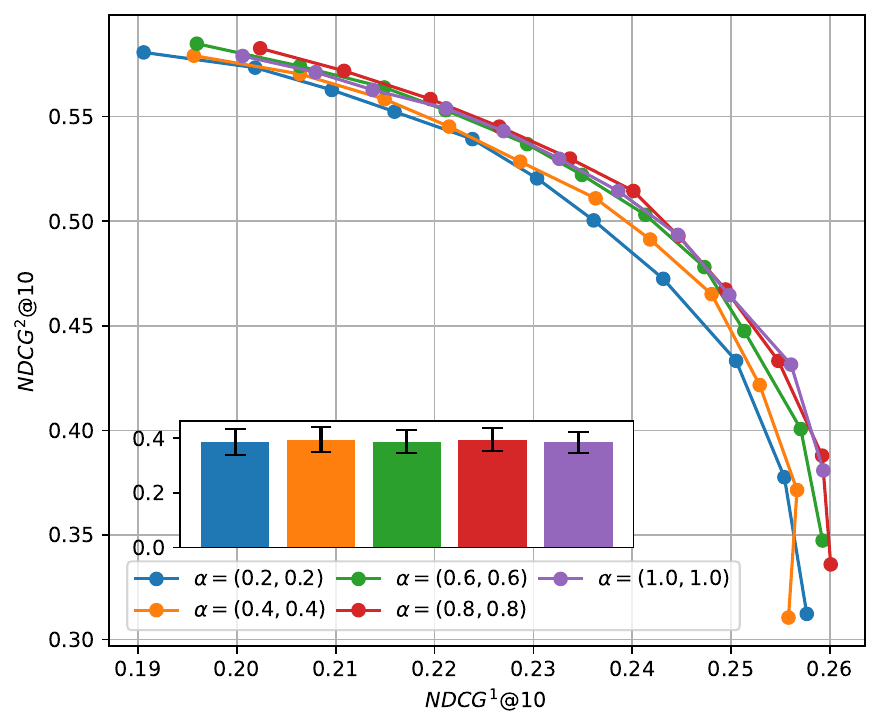}
        \end{minipage}
        \begin{minipage}{.37\textwidth}
            \centering
            \begin{tabular}{c|c}
                \toprule
                \bf $\valpha$ & \bf Hypervolume \\
                \midrule
                $(0.2, 0.2)$ & $1.446\times 10^{-1}$\\[.5ex]
                $(0.4, 0.4)$ & $1.445 \times 10^{-1}$\\[.5ex]
                $(0.6, 0.6)$ & $1.471 \times 10^{-1}$\\[.5ex]
                $(0.8, 0.8)$ & $\bf 1.473 \times 10^{-1}$\\[.5ex]
                $(1.0, 1.0)$ & $1.463 \times 10^{-1}$\\
                \bottomrule
            \end{tabular}
        \end{minipage}
        \caption{$\valpha = (\alpha, \alpha)$ for $\alpha \in \{0.2, 0.4, 0.6, 0.8, 1.0\}$.}
        \label{fig:alpha_1}
    \end{subfigure}

    \begin{subfigure}{\textwidth}
        \centering
        \begin{minipage}{.57\textwidth}
            \centering
            \includegraphics[width=.8\linewidth]{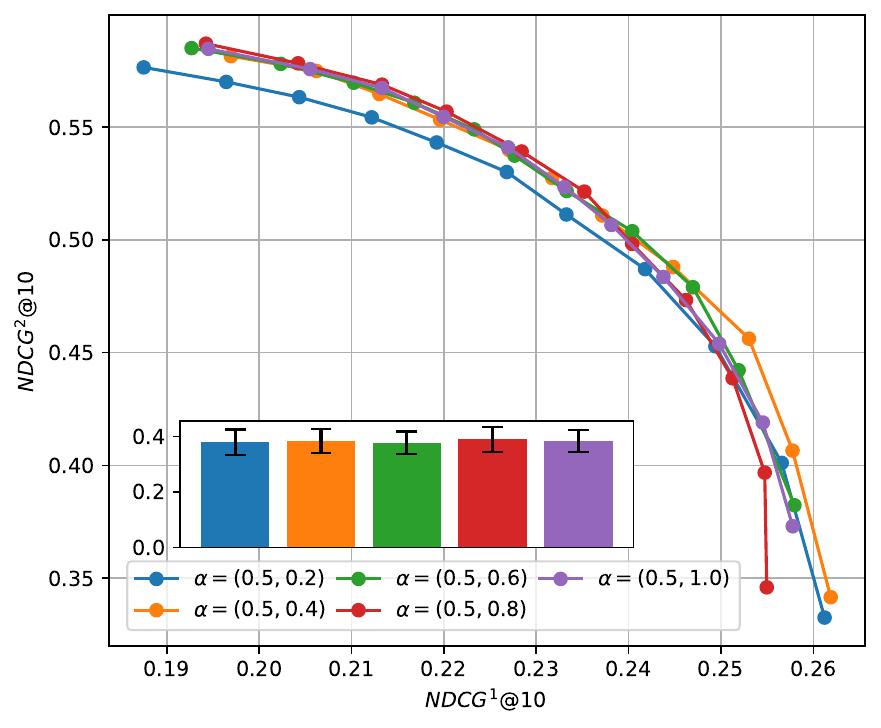}
        \end{minipage}
        \begin{minipage}{.37\textwidth}
            \centering
            \begin{tabular}{c|c}
                \toprule
                \bf $\valpha$ & \bf Hypervolume \\
                \midrule
                $(0.5, 0.2)$ & $1.451 \times 10^{-1}$\\[.5ex]
                $(0.5, 0.4)$ & $\bf 1.474 \times 10^{-1}$\\[.5ex]
                $(0.5, 0.6)$ & $1.466 \times 10^{-1}$\\[.5ex]
                $(0.5, 0.8)$ & $1.458 \times 10^{-1}$\\[.5ex]
                $(0.5, 1.0)$ & $1.464 \times 10^{-1}$\\
                \bottomrule
            \end{tabular}
        \end{minipage}
        \caption{$\valpha = (0.5, \alpha)$ for $\alpha \in \{0.2, 0.4, 0.6, 0.8, 1.0\}$. }
        \label{fig:alpha_2}
    \end{subfigure}

    \begin{subfigure}{\textwidth}
        \centering
        \begin{minipage}{.57\textwidth}
            \centering
            \includegraphics[width=.8\linewidth]{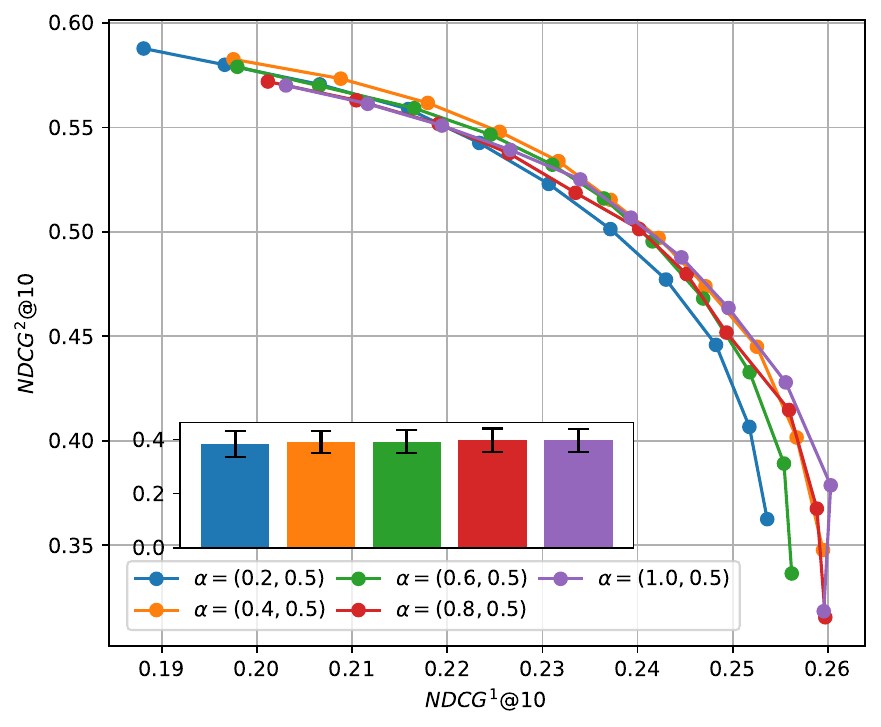}
        \end{minipage}
        \begin{minipage}{.37\textwidth}
            \centering
            \begin{tabular}{c|c}
                \toprule
                \bf $\valpha$ & \bf Hypervolume \\
                \midrule
                $(0.2, 0.5)$ & $1.447 \times 10^{-1}$\\[.5ex]
                $(0.4, 0.5)$ & $\bf 1.468 \times 10^{-1}$\\[.5ex]
                $(0.6, 0.5)$ & $1.445 \times 10^{-1}$\\[.5ex]
                $(0.8, 0.5)$ & $1.444 \times 10^{-1}$\\[.5ex]
                $(1.0, 0.5)$ & $1.445 \times 10^{-1}$\\
                \bottomrule
            \end{tabular}
        \end{minipage}
        \caption{$\valpha = (\alpha, 0.5)$ for $\alpha \in \{0.2, 0.4, 0.6, 0.8, 1.0\}$. }
        \label{fig:alpha_3}
    \end{subfigure}

    \caption{Hyperparameter study on the impact of concentration parameter $\valpha$ on the Pareto fronts obtained by Weight-COS-DPO on the MSLR-WEB10K dataset (Objective I vs Objective II) with different settings of $\valpha$. The hypervolume metric is shown in the table beside each figure.}
    \label{fig:alpha}
\end{figure*}

\subsubsection{Concentration Parameter \texorpdfstring{$\valpha$}{alpha}}
\label{app:alpha}

The concentration parameter $\valpha$ controls the span of the Dirichlet distribution from which the weight vector $\vw$ is sampled and is the key parameter affecting the performance of the COS-DPO framework that should be carefully selected and validated. By the basic properties of the Dirichlet distribution, suppose $\vw \sim \mathrm{Dir}(\valpha)$, then we have 
\begin{equation*}
    \E[\vw] = \dfrac{\valpha}{\|\valpha\|_1}:= \overline{\valpha}, \quad \var(\vw) = \dfrac{\diag(\overline{\valpha}) - \overline{\valpha} \overline{\valpha}^\top}{\|\valpha\|_1 + 1}.
\end{equation*}
In general, the smaller the $\valpha$, the more likely the weight vector $\vw$ is close to the boundary of the simplex, and the larger the $\valpha$, the more likely the weight vector $\vw$ is concentrated around the expectation $\overline{\valpha}$.

As the COS-DPO framework is generally robust to the choice of the concentration parameter $\valpha$, we conduct ablation studies to investigate the impact of the concentration parameter $\valpha$ on the performance of the COS-DPO framework in different settings.
We first conduct experiments on the MSLR-WEB10K dataset with 2 auxiliary objectives (Query-URL Click Count vs URL Click Count) to investigate the impact of the concentration parameter $\valpha$ on the performance of the COS-DPO framework. The results are shown in \cref{fig:alpha}. The experiment settings and plotting details are the same as in the main text.

As shown in \cref{fig:alpha_1}, as the concentration parameter $\valpha$ decreases, COS-DPO obtains a visually more comprehensive Pareto front thanks to more samples close to the boundary of the simplex. However, it is at the cost of a slightly undertrained model across the simplex, indicated by a lower hypervolume metric. It turns out that the choice of $\valpha$ faces a trade-off between the diversity of the samples and the overall quality of the fine-tuning, given a fixed training budget. Similar trade-offs are observed in \cref{fig:alpha_2} and~\ref{fig:alpha_3} when only one dimension of the concentration parameter $\valpha$ is varied.

\begin{figure*}[!t]
    \centering
    \begin{subfigure}{.47\textwidth}
        \includegraphics[width=\linewidth]{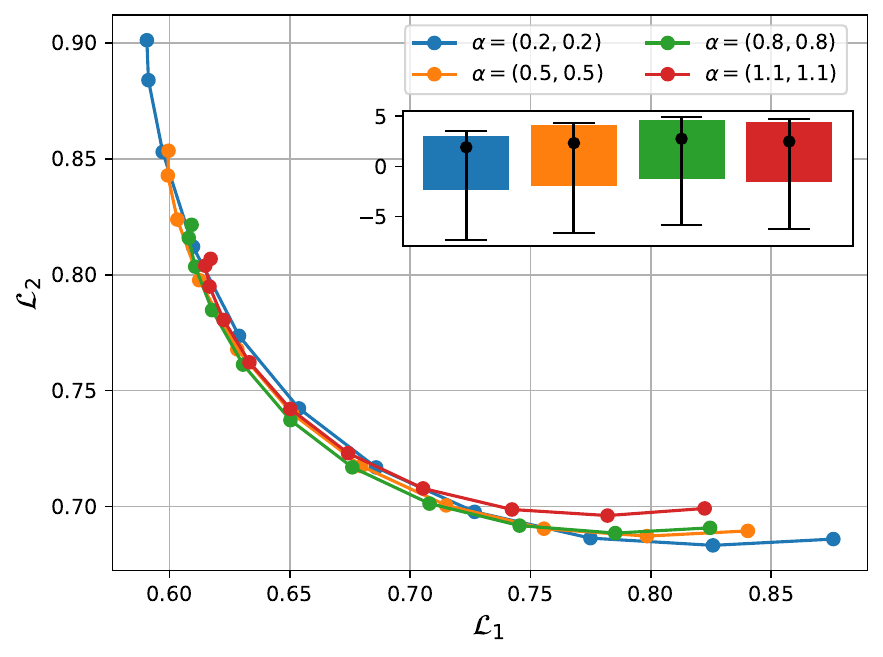}
        \caption{GPT2}
        \label{fig:alpha_gpt2}
    \end{subfigure}
    \hfill
    \begin{subfigure}{.47\textwidth}
        \includegraphics[width=\linewidth]{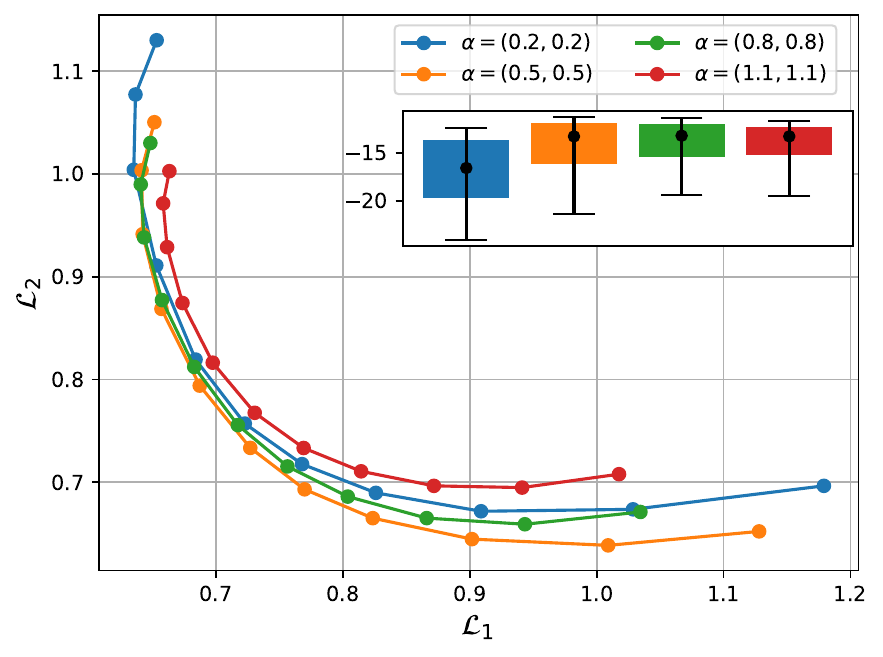}
        \caption{Alpaca-7B-Reproduced}
        \label{fig:alpha_llama2}
    \end{subfigure}
    \caption{Hyperparameter study on the impact of the concentration parameter $\valpha$ on the Pareto fronts obtained by Weight-COS-DPO on the PKU-SafeRLHF dataset.}
    \label{fig:alpha_llm}
\end{figure*}

We also conducted experiments on the PKU-SafeRLHF dataset to investigate the impact of the concentration parameter $\valpha$ on the performance of the COS-DPO framework on the LLM alignment task. The results are shown in \cref{fig:alpha_llm}. A similar pattern is observed in this large-scale task, where a smaller choice of the concentration parameter $\valpha$ leads to a more comprehensive Pareto front. However, it does not necessarily lead to a worse hypervolume metric, suggesting that the performance of COS-DPO here is less hindered by the expressive power of the model, which has already been abundant in the LLM, 
but rather by the diversity of the samples.

\subsubsection{Model Depth}
\label{app:depth}

The depth of the neural network architecture is also crucial for the performance of the COS-DPO framework, as it determines the complexity and the expressiveness of the model.
We also use the MSLR-WEB10K dataset with 2 auxiliary objectives (Query-URL Click Count vs URL Click Count) to investigate the impact of \emph{the model depth} on the performance of the COS-DPO framework. The results are shown in \cref{fig:block_a}, where the depth, referring to the number of transformer layers in the model, is varied from 1 to 5. As shown in the figure, the performance of the COS-DPO framework is first significantly improved and gradually saturated with the increase of depth. Besides, while the hypervolume metric improves, the coverage of the Pareto front does not change significantly with the increase in depth. This suggests that the concentration parameter $\valpha$ may have a more significant impact on the diversity of the samples than the model depth.

\subsubsection{Model Parametrization}
\label{app:ppodpo}

In general, one could adopt one of the two different parametrizations of $s_{\theta}(\cdot, \vw | \vx)$ (or $s_{\theta}(\cdot, \vw, \vbeta | \vx)$, respectively) in the COS-DPO framework. For simplicity, we will only discuss the case of Weight-COS-DPO in the following, and the discussion can be easily extended to the Temperature-COS-DPO method naturally.
\begin{itemize}[leftmargin=*]
    \item{\it \underline{Training-from-Scratch}}:
    The model $s_{\theta}(\cdot, \vw| \vx)$ is a completely separate neural network from the base model $s_0(\vy | \vx)$. Depending on the specific design of the additional inputs $\vw$, the new model may or may not share the same architecture as the base model. The main advantage of this design is that it requires less memory and computation resources~\citep{rafailov2024direct}, and thus is more suitable for large-scale applications, \emph{e.g.}, LLMs. 
    \item{\it \underline{Augmentation Network}}:
    As several works~\citep{chen2024preference,xu2024dpo} argue that DPO is prone to overfitting, one may curb the complexity of the model for the score function $s_\theta(\cdot, \vw | \vx)$ by only adding a first-order correction term to the base model $s_0(\vy | \vx)$ as:
    \begin{equation*}
        s_{\theta}(\vy, \vw | \vx) = s_0(\vy | \vx) + \Delta s_{\theta}(\vy, \vw | \vx),
    \end{equation*}
    where the parameters in the base model are fixed, and the importance-conditioned design is only applied to the correction term $\Delta s_\theta(\cdot, \vw | \vx)$. This design allows limited modification and reversibility to the base model and is thus suitable for applications where the fine-tuning is limited in budget, frequent, or expected to be minor.
\end{itemize}

\begin{figure}[ht]
    \centering
    \begin{subfigure}{.47\textwidth}
        \centering
        \begin{tikzpicture}
            \node (y) at (-1.2, 2.75) {$\vy|\vx$};

            \node (w) at (-1.2, .75) {$\vw$};

            \node (beta) at (-1.2, -1.25) {$\vbeta$};

            \node[rectangle, rounded corners=.5cm, draw, fit={(0,2) (1.,3.5)}, fill=gray!70!white, fill opacity=0.3, text opacity=1] (s_base) {$s_0$};
            \node[rectangle, rounded corners=.5cm, draw, fit={(0,0) (1.,1.5)}, fill=yellow!70!white, fill opacity=0.3, text opacity=1] (s_theta) {$s_\theta$};

            \node (L) at (2.5, 1.75) {$\Ls_{\mathrm{ICOS}}$};

            \draw[->] (y) -- (s_base);
            \draw[->] (y) to[in=160, out=-20] (s_theta); 
            \draw[->] (w) -- (s_theta);
            \draw[->,dashed] (beta) to[in=200, out=-20] (s_theta);

            \draw[dashed,->] (s_base) to[out=0,in=135] (L);
            \draw[->] (s_theta) to[out=0,in=-135] (L);
        \end{tikzpicture}
        \caption{Training-from-Scratch}
        \label{fig:ICOS_a}
    \end{subfigure}
    \hfill
    \begin{subfigure}{.52\textwidth}
        \centering
        \begin{tikzpicture}
            \node (y) at (-1.2, 2.75) {$\vy|\vx$};

            \node (w) at (-1.2, .75) {$\vw$};

            \node (beta) at (-1.2, -1.25) {$\vbeta$};

            \node[rectangle, rounded corners=.5cm, draw, fit={(0,2) (1.,3.5)}, fill=gray!70!white, fill opacity=0.3, text opacity=1] (s_base) {$s_0$};
            \node[rectangle, rounded corners=.5cm, draw, fit={(0,.25) (1.,1.25)}, fill=yellow!70!white, fill opacity=0.3, text opacity=1] (delta_s_theta) {$\Delta s_\theta$};
            \node[circle, draw, plus, minimum size=.5cm, inner sep=0pt] (oplus) at (2, .75) {};
            \node[rectangle, rounded corners=.5cm, draw, fit={(3.,0) (4.,1.5)}, fill=yellow!70!white, fill opacity=0.3, text opacity=1] (s_theta) {$s_\theta$};

            \node (L) at (5.5, 1.5) {$\Ls_{\mathrm{ICOS}}$};

            \draw[->] (y) -- (s_base);
            \draw[->] (y) to[in=160, out=-20] (delta_s_theta); 
            \draw[->] (w) -- (delta_s_theta);
            \draw[->,dashed] (beta) to[in=200, out=-20] (delta_s_theta);
            \draw[dashed,->] (s_base) to[out=-20,in=90] (oplus);
            \draw[->] (delta_s_theta) -- (oplus);
            \draw[->] (oplus) -- (s_theta);

            \draw[dashed,->] (s_base) to[out=0,in=135] (L);
            \draw[->] (s_theta) to[out=0,in=-135] (L);
        \end{tikzpicture}
        \caption{Augmentation Network}
        \label{fig:ICOS_b}
    \end{subfigure}
    \caption{Illustration of two different parametrizations of the model $s_{\theta}(\cdot, \vw | \vx)$ in the COS-DPO framework. Dashed lines denote that backpropagation is not applied.}
    \label{fig:ICOS}
\end{figure}

The two parametrizations are illustrated in \cref{fig:ICOS_a} and~\ref{fig:ICOS_b}, respectively. Both parametrizations can be seamlessly applied to the COS-DPO framework and easily switch between each other. In all the experiments presented in the main text, we have adopted the training-from-scratch design for the COS-DPO framework. \cref{fig:block_b} shows the results of the COS-DPO framework with the augmentation training design on the same task as the previous ablation studies. Compared with \cref{fig:block_a}, the augmentation training achieves a roughly better performance than the training-from-scratch design with the same depth, coinciding with the intuition that the augmentation training benefited from the information provided by the base model and instead of learning the entire score function $s_{\theta}(\cdot, \vw | \vx)$ from scratch, it only needs to learn the correction term $\Delta s_{\theta}(\cdot, \vw | \vx)$. When the model depth is increased, the performance of the augmentation training is also improved, sharing the same trend as the training-from-scratch design.

\begin{figure*}[!t]
    \centering
    \begin{subfigure}{.47\textwidth}
        \includegraphics[width=\linewidth]{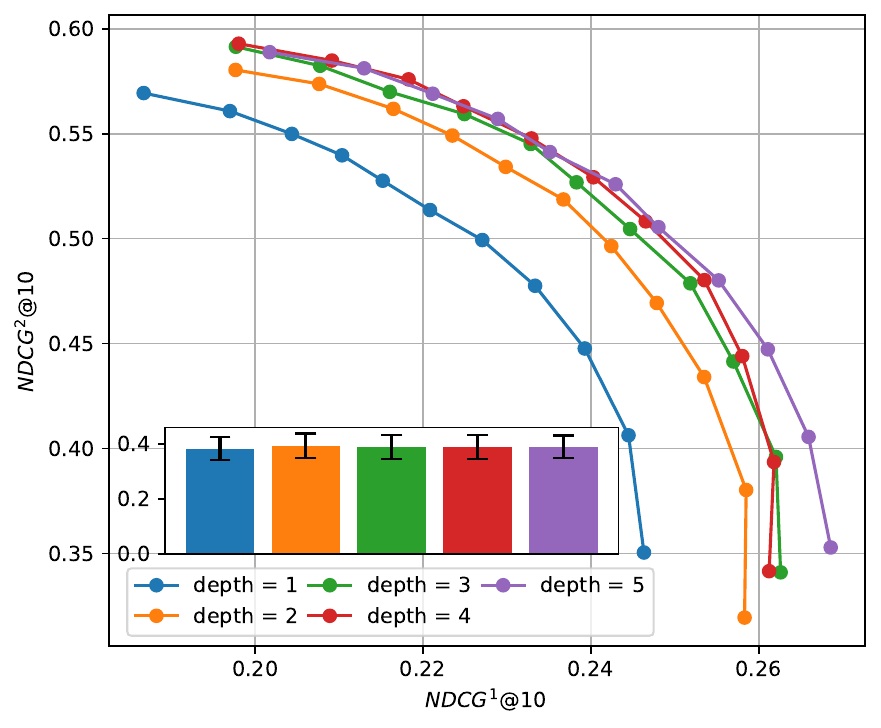}
        \caption{Training-from-Scratch}
        \label{fig:block_a}
    \end{subfigure}
    \hfill
    \begin{subfigure}{.47\textwidth}
        \includegraphics[width=\linewidth]{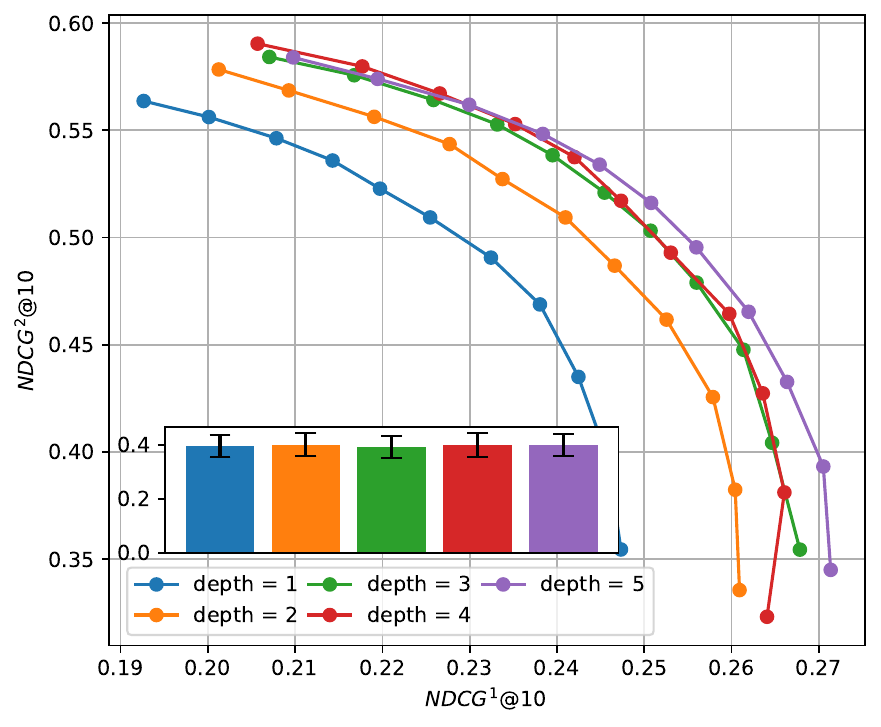}
        \caption{Augmentation Network}
        \label{fig:block_b}
    \end{subfigure}
    \caption{Study on the impact of the model depth and the model parametrizations on the Pareto fronts obtained by Weight-COS-DPO on the MSLR-WEB10K dataset (Objective I vs Objective II).}
    \label{fig:block}
\end{figure*}

\subsection{Additional Results on Linear Transformation Property}
\label{app:post_training}

As discussed in \cref{sec:linear}, the linear transformation property implies that the model can be scaled proportionally by a constant factor $c$ by a simple linear transformation of the output scores. We provide two relevant experiments on the post-training controls of the Weight-COS networks obtained by Weight-COS-DPO based on this property.

\begin{figure}[!t]
    \centering
    \begin{subfigure}{.47\textwidth}
        \centering
        \includegraphics[width=\linewidth]{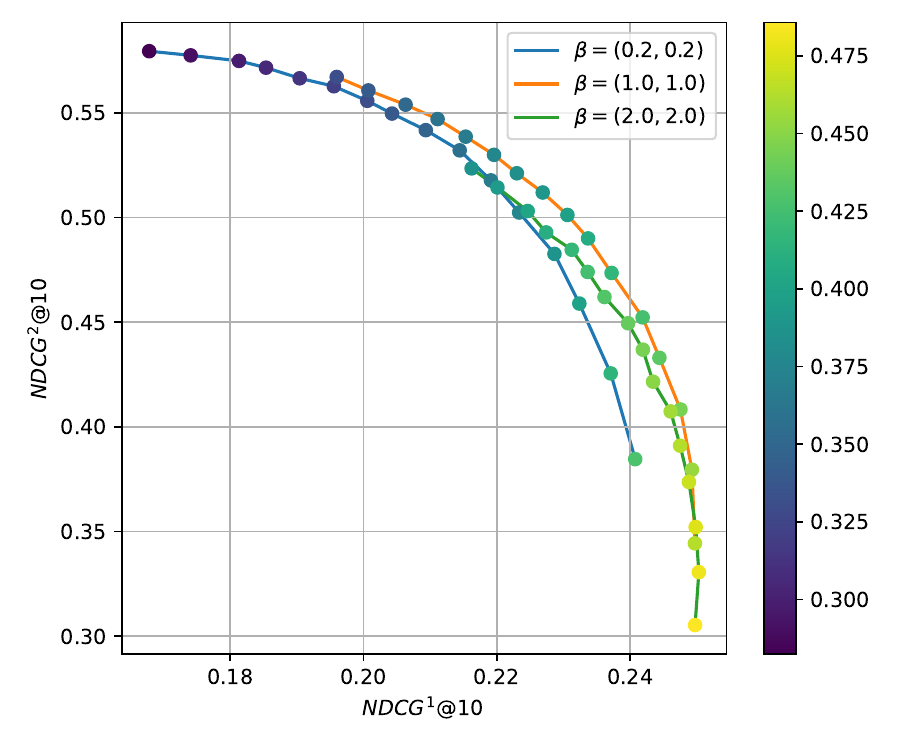}
        \caption{Objective I vs Objective II.}
        \label{fig:beta_1}
    \end{subfigure}
    \hfill
    \begin{subfigure}{.47\textwidth}
        \centering
        \includegraphics[width=\linewidth]{figures/beta_qs_qs2.pdf}
        \caption{Objective IV vs Objective V.}
        \label{fig:beta_2}
    \end{subfigure}
    \caption{Examples of post-training control of Weight-COS networks over temperature $\vbeta$ on the MSLR-WEB10K dataset with 2 auxiliary objectives. Two axes denote the NDCG@10 of the two auxiliary objectives (the higher, the better). The colorbar denotes the NDCG@10 of the main objective.}
    \label{fig:beta}
\end{figure}

\begin{figure}[!t]
    
    \centering
    \begin{subfigure}{.47\textwidth}
        \centering
        \includegraphics[width=\linewidth]{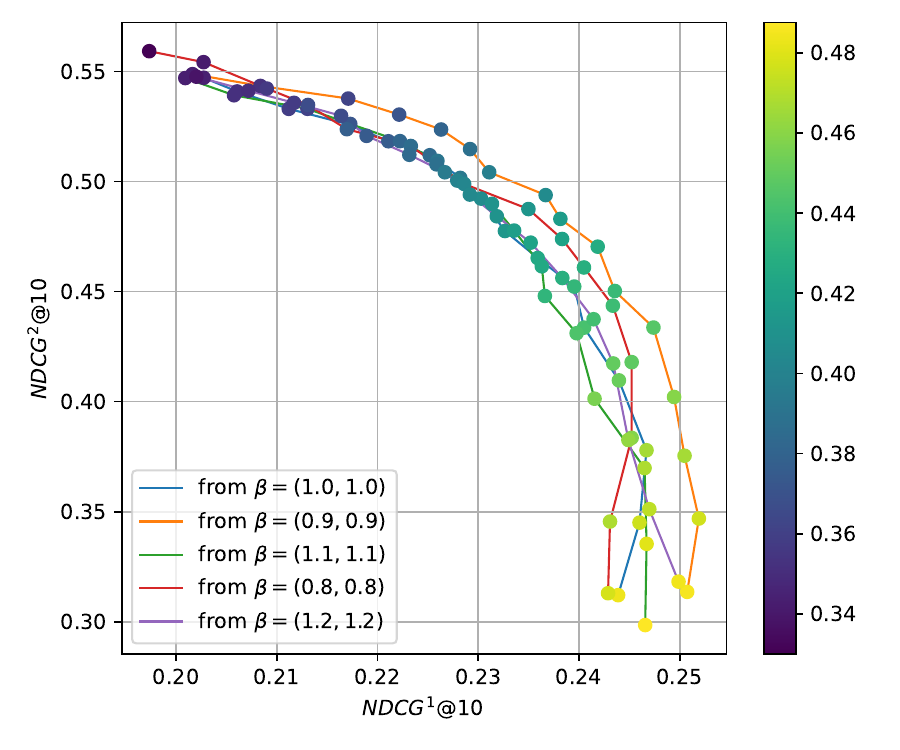}
        \caption{Objective I vs Objective II.}
        \label{fig:from_1}
    \end{subfigure}
    \hfill
    \begin{subfigure}{.47\textwidth}
        \centering
        \includegraphics[width=\linewidth]{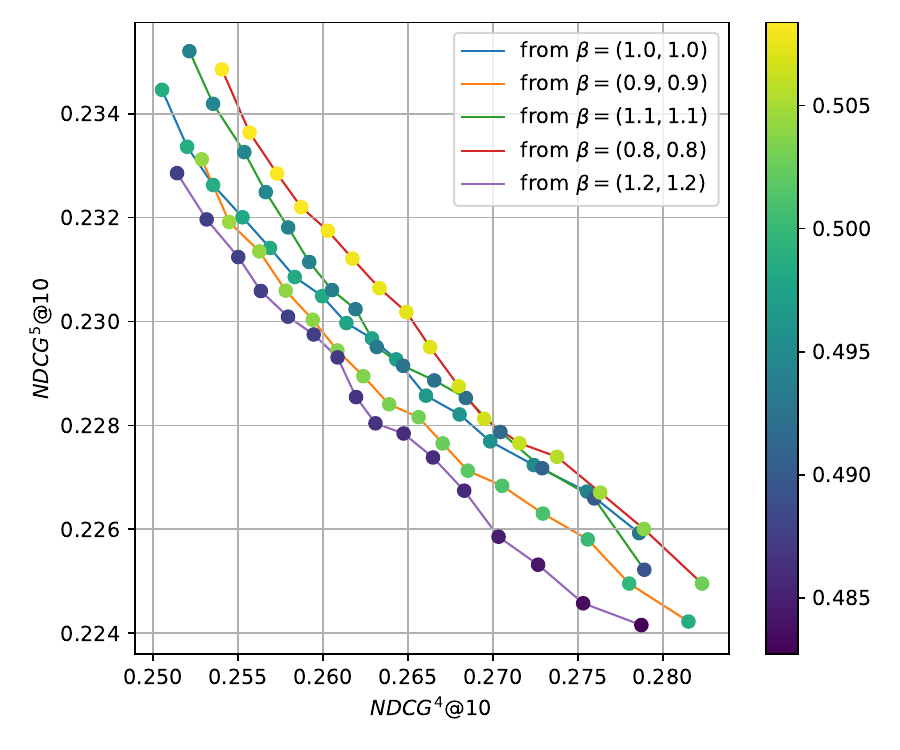}
        \caption{Objective IV vs Objective V.}
        \label{fig:from_2}
    \end{subfigure}
    \caption{Empirical validation of the linear transformation property of Weight-COS networks on the MSLR-WEB10K dataset with 2 auxiliary objectives. The Pareto fronts in the figures are obtained by first training a model with the temperature $\vbeta$ in the legend and then transforming to the same temperature $\vbeta = (1, 1)$ using post-training controls. Two axes denote the NDCG@10 of the two auxiliary objectives (the higher, the better). The colorbar denotes the NDCG@10 of the main objective.}
    \label{fig:from}
\end{figure}

\cref{fig:beta} gives examples of the post-training control over the temperature $\vbeta$ on the MSLR-WEB10K dataset with 2 auxiliary objectives. As the temperature $\vbeta$ increases, the Pareto front shifts towards the direction where the main objective is more emphasized, which is consistent with our expectations. In \cref{fig:beta_2}, the two auxiliary objectives are in balance, and thus, the shifts of the Pareto fronts resemble that depicted in \cref{fig:post_training}. However, in \cref{fig:beta_1}, the unexpected shifting pattern is observed, which may reflect the complex interactions between the main and auxiliary objectives.

\cref{fig:from} provides empirical validation of the linear transformation property on the MSLR-WEB10K dataset with 2 auxiliary objectives. The methodology is that we first train a Weight-COS network with the different temperatures $\vbeta$ ranging from $(0.8, 0.8)$ to $(1.2, 1.2)$, and then transform the Pareto fronts obtained by the trained models to the same temperature $\vbeta = (1, 1)$ using the post-training control as indicated in~\eqref{eq:linear_transform} and \cref{alg:COS-DPO-pen}. The penalization coefficient $\lambda$ is set to $0.05$ in the training.
The results show that the transformed Pareto fronts are roughly aligned with each other, which validates the linear transformation property of the model. The slight deviation may be caused by the noises in the training process and the non-uniqueness of the optimal solutions of the Weight-COS-DPO loss.

\subsection{Additional Results of Temperature-COS-DPO}
\label{app:temp}

All experiments of the Temperature-COS-DPO method are conducted on the MSLR-WEB10K dataset with 2 auxiliary objectives (Quality Score vs Quality Score 2) to investigate the performance of the Temperature-COS-DPO, as it provides better visualization and comparisons of the Pareto fronts with different temperature parameters $\vbeta$.
In particular, we adopt the augmentation network design for Temperature-COS networks for better expressive power and stability. 

\begin{figure*}[t]
    \centering
    \begin{subfigure}{.47\textwidth}
        \includegraphics[width=\linewidth]{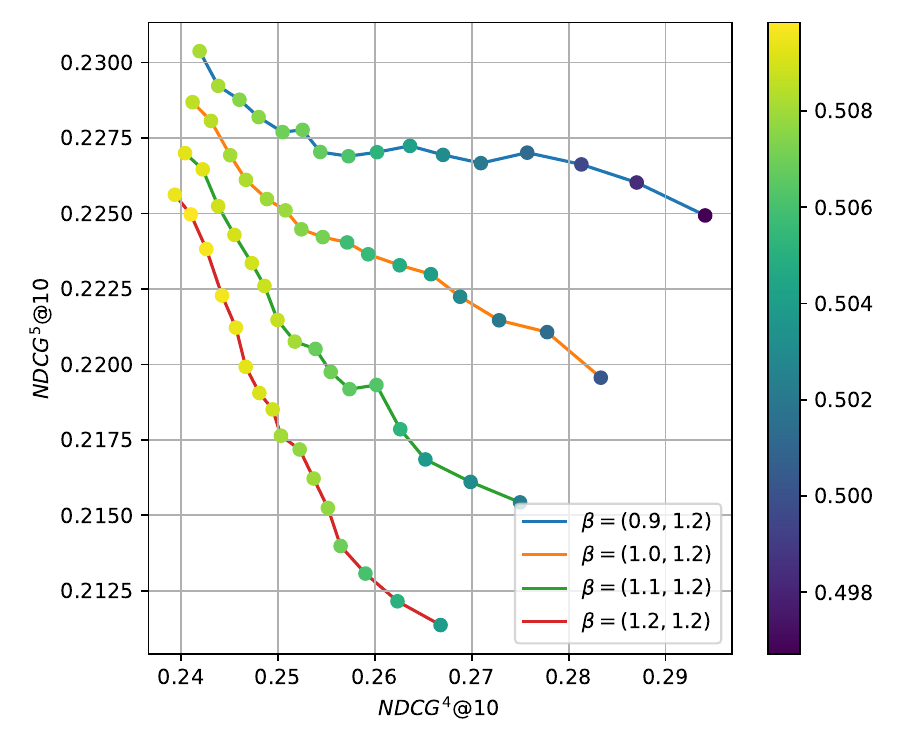}
        \caption{$\vbeta = (\beta, 1.2)$ for $\beta\in\{0.9, 1.0, 1.1, 1.2\}$.}
    \end{subfigure}
    \hfill 
    \begin{subfigure}{.47\textwidth}
        \includegraphics[width=\linewidth]{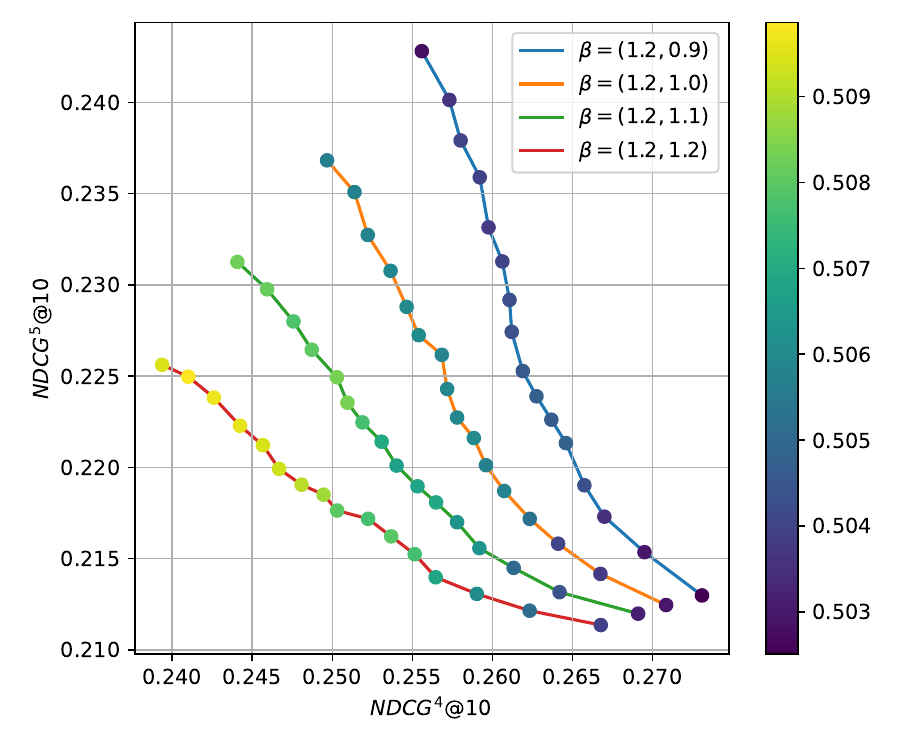}
        \caption{$\vbeta = (1.2, \beta)$ for $\beta\in\{0.9, 1.0, 1.1, 1.2\}$.}
    \end{subfigure}
    \caption{Preliminary results of Temperature-COS-DPO on the MSLR-WEB10K dataset (Objective IV vs Objective V). The colorbar denotes the NDCG@10 of the main objective.}
    \label{fig:temp}
\end{figure*}

We provide the results of the Temperature-COS-DPO method on the LTR fine-tuning task in \cref{fig:temp}. The model depth is chosen to be 5, and the distribution $\gD_\vbeta$ is set to be $\unif([0.67, 1.5]^2)$.
The results demonstrate the Temperature-COS-DPO method is capable of capturing the trade-off between the main objective and the auxiliary objectives for all kinds of temperature configurations $\vbeta$, and the Pareto fronts exhibit expected behaviors with different $\vbeta$. These results suggest that temperature-conditioned one-shot fine-tuning is a promising direction for the COS-DPO framework to achieve more flexible control over the Pareto front.

Given the choices of the temperature parameters, the Pareto fronts in both figures in~\cref{fig:temp} should merge into one single point, which refers to the solution of the single-objective fine-tuning task with certain temperature parameter $\beta$. Although the results are roughly in accordance with the theoretical expectations, there are still small gaps that may be accounted for by the limit of the expressive power of the model and insufficient exploration over the weight vector $\vw$. 

\begin{figure}[p]
    \centering
    \begin{subfigure}{.47\textwidth}
        \includegraphics[width=\linewidth]{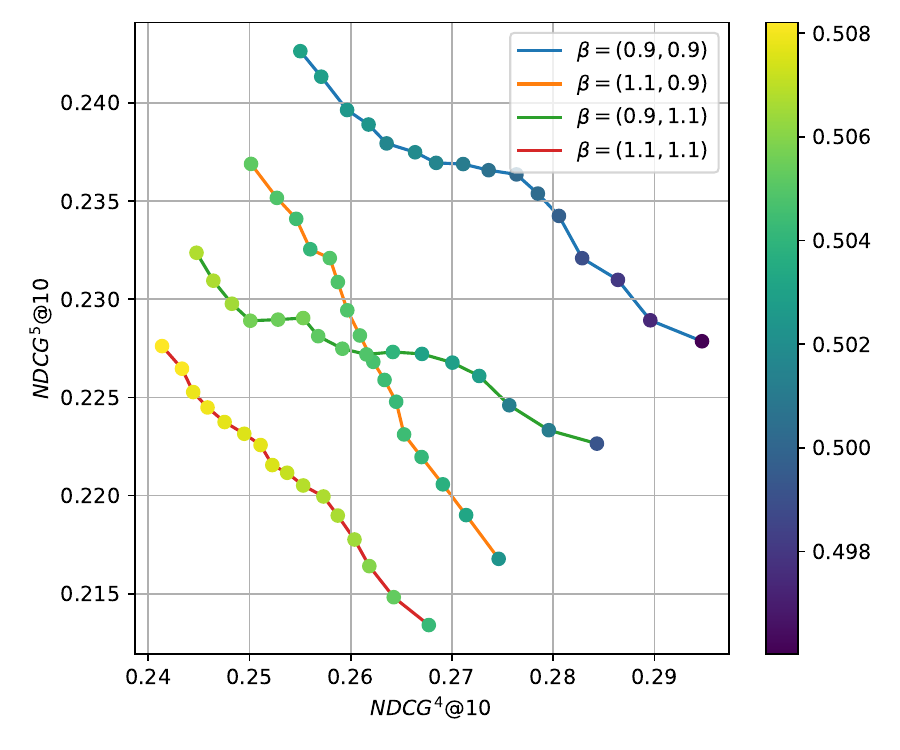}
        \caption{Depth = 2.}
        \label{fig:beta_dep_2}
    \end{subfigure}
    \hfill
    \begin{subfigure}{.47\textwidth}
        \includegraphics[width=\linewidth]{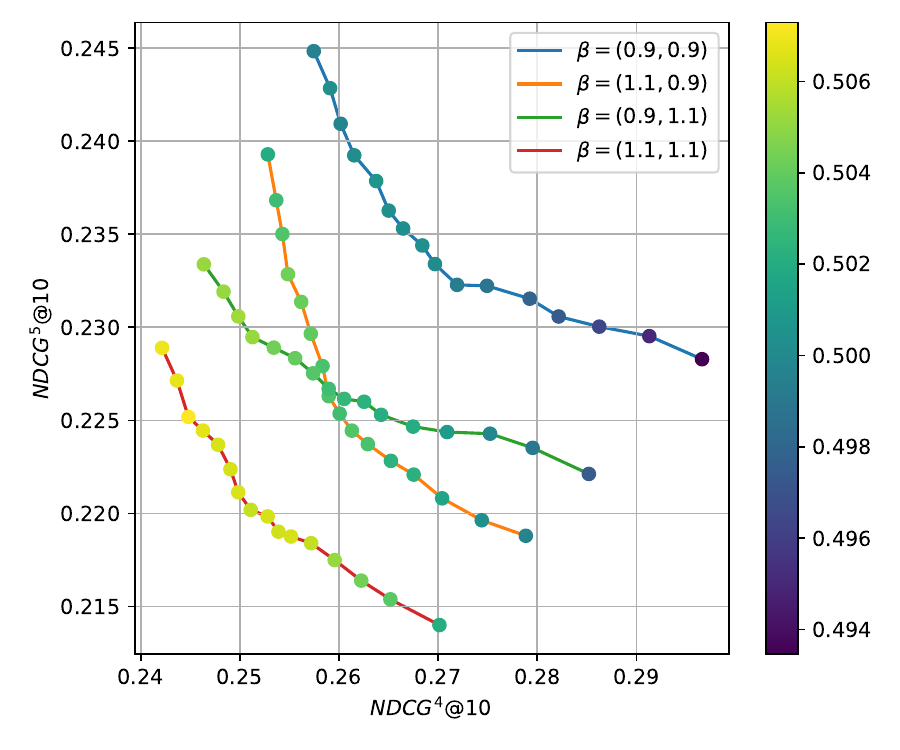}
        \caption{Depth = 3.}
        \label{fig:beta_dep_3}
    \end{subfigure}
    
    \begin{subfigure}{.47\textwidth}
        \includegraphics[width=\linewidth]{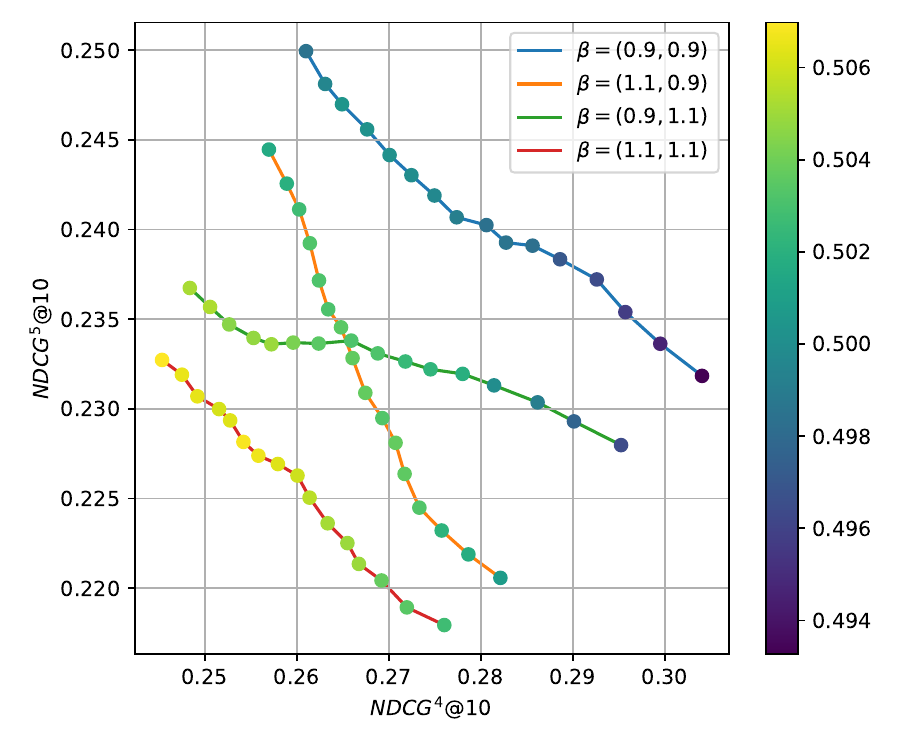}
        \caption{Depth = 4.}
        \label{fig:beta_dep_4}
    \end{subfigure}
    \hfill
    \begin{subfigure}{.47\textwidth}
        \includegraphics[width=\linewidth]{figures/beta_5.pdf}
        \caption{Depth = 5.}
        \label{fig:beta_dep_5}
    \end{subfigure}
    \caption{Ablation study of the impact of model depth on the Pareto fronts obtained by the Temperature-COS-DPO on the MSLR-WEB10K dataset (Objective IV vs Objective V). The colorbar denotes the NDCG@10 of the main objective.}
    \label{fig:temp_depth}
\end{figure}

To explain this, we present ablation studies to investigate the effect of the expressiveness of the model on the performance of the Temperature-COS-DPO. We applied models with 2 to 5 layers of transformer architecture, and the results show that the performance, indicated by the expected behaviors of the Pareto front, is drastically improved with the increase of the number of layers. While swallower models yield Pareto fronts with less expected behaviors and more noise, \emph{e.g.}, the concavity of the Pareto fronts in \cref{fig:beta_dep_3} partially indicates the insufficiency of the training process, the model with 5 layers of transformer architecture in \cref{fig:beta_dep_5} exhibits improved scores and more expected behaviors according to different temperature configurations. This suggests and confirms the intuition that Temperature-COS-DPO requires more expressive structures to capture the complex trade-offs between the main and auxiliary objectives.

\begin{figure}[t]
    \centering
    \begin{subfigure}{.47\textwidth}
        \includegraphics[width=\linewidth]{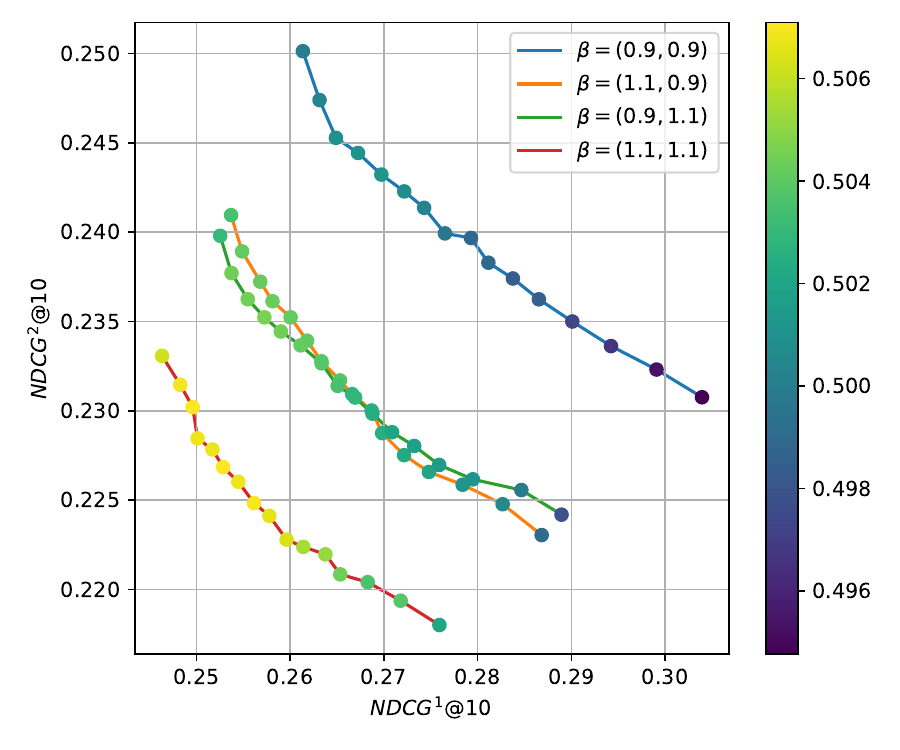}
        \caption{$\gD_\vbeta = \unif([0.83, 1.2]^2)$.}
        \label{fig:beta_83}
    \end{subfigure}
    \hfill
    \begin{subfigure}{.47\textwidth}
        \includegraphics[width=\linewidth]{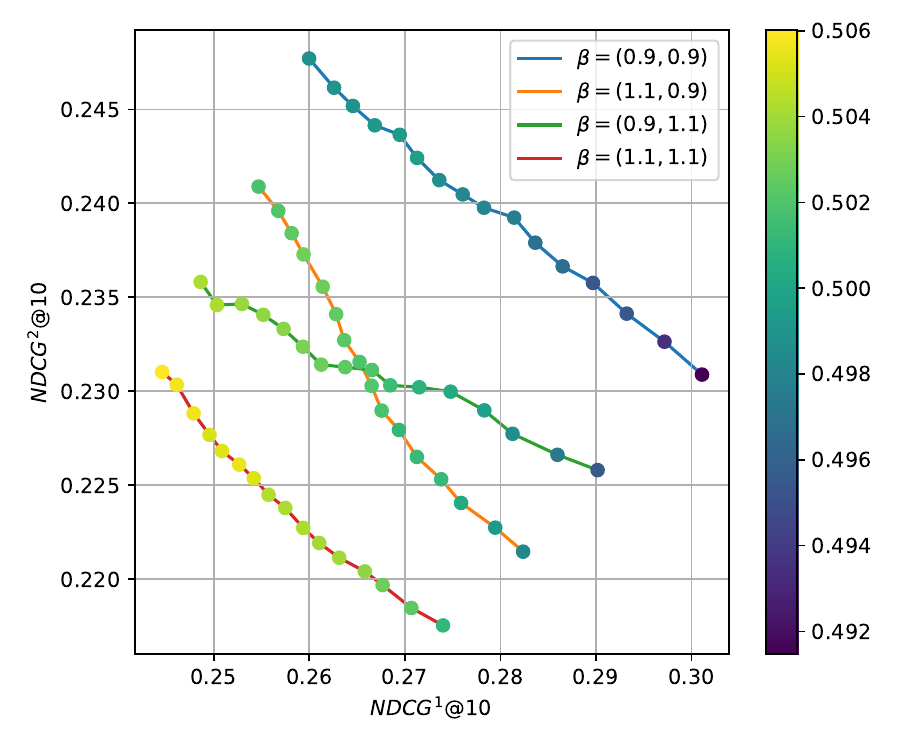}
        \caption{$\gD_\vbeta = \unif([0.71, 1.4]^2)$.}
        \label{fig:beta_71}
    \end{subfigure}
    
    \begin{subfigure}{.47\textwidth}
        \includegraphics[width=\linewidth]{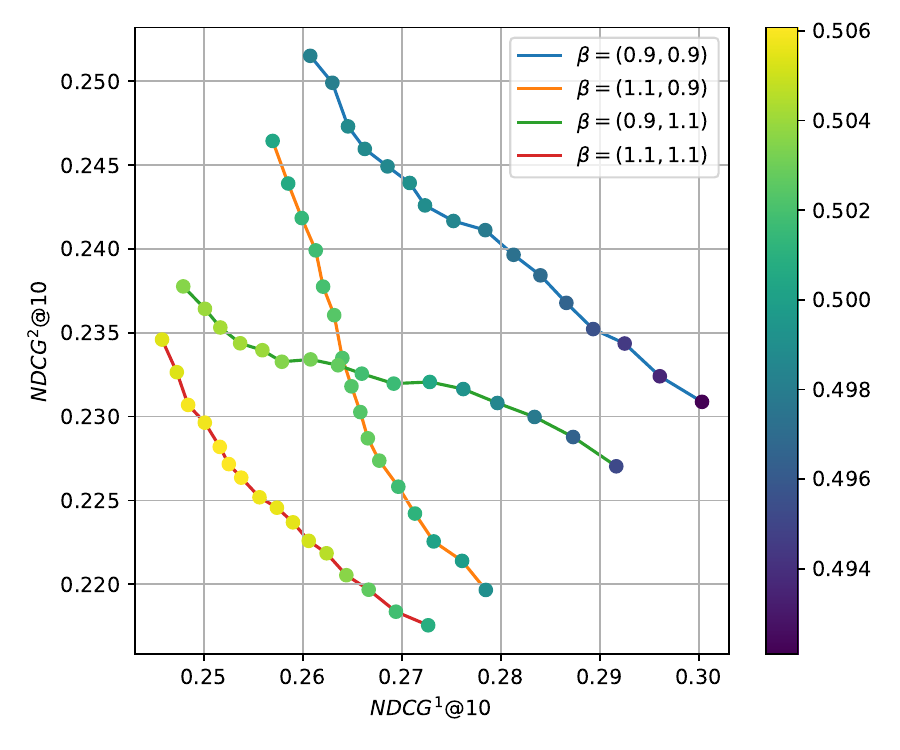}
        \caption{$\gD_\vbeta = \unif([0.63, 1.6]^2)$.}
        \label{fig:beta_63}
    \end{subfigure}
    \caption{Ablation study of the impact of the distribution $\gD_\vbeta$ on the Pareto fronts obtained by the Temperature-COS-DPO on the MSLR-WEB10K dataset (Objective IV vs Objective V). The colorbar denotes the NDCG@10 of the main objective.}
    \label{fig:temp_dist}
\end{figure}

The choice of the distribution $\gD_\vbeta$ also affects the performance of Temperature-COS-DPO. \cref{fig:temp_dist} shows the impact of the distribution $\gD_\vbeta$ on the Pareto fronts obtained by the Temperature-COS-DPO on the MSLR-WEB10K dataset. When the distribution $\gD_\vbeta$ only covers a small range, the Pareto fronts exhibit less expected behaviors and more noise, \emph{e.g.}, the Pareto fronts with $\gD_\vbeta = \unif([0.83, 1.2]^2)$ in \cref{fig:beta_83} are less concave and more scattered.
As the range of the distribution $\gD_\vbeta$ increases, the Pareto fronts become more concave and exhibit less desired behaviors. The results suggest that the distribution $\gD_\vbeta$ should cover a larger range than those interested to ensure sufficient training.

Given the results shown above and our studies on several hyperparameters, we conclude that despite requiring more expressive structures and more training resources, Temperature-COS-DPO is a feasible and promising direction for the COS-DPO framework to achieve more flexible control over the Pareto front and we expect to further investigate the validity of Temperature-COS networks and apply them to more complex multi-objective optimization tasks in the future.

\section{Missing Proofs}
\label{app:proofs}

In this section, we provide the proofs of the propositions and theorems mentioned in the main text.

\subsection{Proofs of Reparametrization-Related Arguments}
\label{app:reparam}

We prove the reparametrization of the DPO loss~\eqref{eq:dpo} and the LiPO loss~\eqref{eq:listnet_dpo} below.

\begin{proof}[Proof of~\eqref{eq:dpo}]
   Recall that in the second step of PPO, we consider the loss function~\eqref{eq:ppo_2} as follows:
   \begin{equation*}
        \begin{aligned}
            -\Ls(p_\theta; p_0, r_\phi, \beta) 
            &= \E_{(\vx, y) }\left[r_\phi(y | \vx) - \beta \log \dfrac{p_\theta(y|\vx)}{p_0(y|\vx)} \right]\\
            &= \int \left(r_\phi(y | \vx) - \beta \log \dfrac{p_\theta(y|\vx)}{p_0(y|\vx)} \right) p_\theta(y|\vx) \dif y,
        \end{aligned}
    \end{equation*}
    we calculate the functional derivative of the loss w.r.t. the density function $p_\theta(y|\vx)$:
    \begin{equation*}
        \begin{aligned}
            &\dfrac{\delta \Ls(p_\theta; p_0, r_\phi, \beta)}{\delta p_\theta(y|\vx)} 
            = \lim_{\epsilon \to 0} \dfrac{\Ls(p_\theta + \epsilon \delta p_\theta; p_0, r_\phi, \beta) - \Ls(p_\theta; p_0, r_\phi, \beta)}{\epsilon}\\
            =& \lim_{\epsilon \to 0}\dfrac{1}{\epsilon} \bigg[ \int \left(r_\phi(y | \vx) - \beta \log \dfrac{p_\theta(y|\vx)}{p_0(y|\vx)} - \beta \dfrac{\epsilon \delta p_\theta(y|\vx)}{p_\theta(y|\vx)} \right) (p_\theta(y|\vx) + \epsilon \delta p_\theta(y|\vx)) \dif y\\
            &\quad \quad - \int \left(r_\phi(y | \vx) - \beta \log \dfrac{p_\theta(y|\vx)}{p_0(y|\vx)} \right) p_\theta(y|\vx) \dif y \bigg]\\
            =& \int \left(r_\phi(y | \vx) - \beta \log \dfrac{p_\theta(y|\vx)}{p_0(y|\vx)} - \beta \right) \delta p_\theta(y|\vx) \dif y.
        \end{aligned}
    \end{equation*}

    Let the functional derivative vanish, we obtain 
    \begin{equation*}
        r_\phi(y | \vx) = \beta \log \dfrac{p_\theta(y|\vx)}{p_0(y|\vx)} + \beta,
    \end{equation*}
    \emph{i.e.},
    \begin{equation*}
        p_\theta(y|\vx)\ \propto\  p_0(y|\vx) \exp\left(\dfrac{r_\phi(y | \vx)}{\beta}\right).
    \end{equation*}

    Since the likelihood $\P(y_1 \succ y_2 | \vx)$~\eqref{eq:btl} in the BTL model only depends on the difference of the reward functions, $r_\phi(y | \vx)$ admits an arbitrary constant shift, and thus we assume $r_\phi(y | \vx)$ to be normalized in a way such that 
    \begin{equation*}
        \E\left[p_0(y|\vx) \exp\left(\dfrac{r_\phi(y | \vx)}{\beta}\right)\right] = 1,
    \end{equation*}
    which leads to the reparametrization $r_\theta(y|\vx) = \beta \log \frac{p_\theta(y|\vx)}{p_0(y|\vx)}$, plugging which into the PPO loss~\eqref{eq:ppo_2} yields the DPO loss~\eqref{eq:dpo}.
\end{proof}

\begin{proof}[Proof of~\eqref{eq:listnet_dpo}]
    
    As in the derivation of the DPO loss~\eqref{eq:dpo} under the BTL model, we first consider the PPO algorithm for the PL model:
    \begin{enumerate}[leftmargin=*, label={\it Step \arabic*.}, itemsep=0em]
        \item Find the optimal score function $s_\phi(\vy | \vx)$ that minimizes the ListNet loss~\eqref{eq:listnet}:
        \begin{equation}
            -\Ls_{\rm LN}(s_\theta; \gD_{\mathrm{LTR}}^j) = \E \left[\sum_{i=1}^{n} \overline z_{i}^{j} \log \dfrac{\exp(s_\phi(\vy_i| \vx))}{\sum_{i'=1}^{n} \exp(s_\phi(\vy_{i'}| \vx))}\right];
            \label{eq:ppo_1_listnet}
        \end{equation}
        \item Fine-tune the base model $s_0$ with the optimal score function $s_\phi$ by maximizing the expected score value while penalizing the KL divergence between the new model and the base model:
        \begin{equation}
            -\Ls(p_\theta; p_0, r_\phi, \beta) 
            = \E\left[s_\phi(\vy | \vx) \right] - \beta \KL(p_\theta|| p_0)
            = \E\left[s_\phi(\vy | \vx) - \beta \log \dfrac{p_\theta(\vy|\vx)}{p_0(\vy|\vx)} \right].
            \label{eq:ppo_2_listnet}
        \end{equation}
    \end{enumerate}

    For the optimization problem in the second step~\eqref{eq:ppo_2_listnet}, following the same procedure as in the proof of~\eqref{eq:dpo}, we solve the optimal $p_\theta$ by letting the functional derivative of the loss w.r.t. the density function $p_\theta(y|\vx)$ vanish and obtain
    \begin{equation}
        p_\theta(\vy|\vx)\ \propto\  p_0(\vy|\vx) \exp\left(\dfrac{s_\phi(\vy | \vx)}{\beta}\right).      
        \label{eq:prop_base}  
    \end{equation}
   
    By the assumption of the PL model and the ListNet loss, we have $p_\theta(\vy | \vx)$ modeled as the top-1 probability of the PL model and thus related to the score function $s_\theta(\vy | \vx)$ via
    \begin{equation*}
        p_\theta(\vy|\vx) = \dfrac{\exp(s_\theta(\vy | \vx))}{\sum_{i'=1}^{n} \exp(s_\theta(\vy_{i'}| \vx))}.
    \end{equation*}
    Let 
    $$p_0(\vy|\vx) = \frac{\exp(s_0(\vy | \vx))}{\sum_{i'=1}^{n} \exp(s_0(\vy_{i'}| \vx))},$$
    \eqref{eq:prop_base} can be rewritten as
    \begin{equation*}
        \exp(s_\theta(\vy | \vx))\ \propto\ \exp\left( s_0(\vy | \vx) + \beta s_\phi(\vy | \vx) \right),
    \end{equation*}
    \emph{i.e.},
    \begin{equation*}
        s_\theta(\vy | \vx) = s_0(\vy | \vx) + \beta s_\phi(\vy | \vx) + C,
    \end{equation*}
    where $C$ is a constant shift. By noticing that the softmax function in~\eqref{eq:ppo_1_listnet} is invariant to the constant shift of the score function $s_\phi(\vy | \vx)$, we may choose certain normalization such that
    \begin{equation*}
        s_\theta(\vy | \vx) = s_0(\vy | \vx) + \beta s_\phi(\vy | \vx)
    \end{equation*}
    holds, plugging which into the loss~\eqref{eq:ppo_1_listnet} yields the reparametrized ListNet loss~\eqref{eq:listnet_dpo}.
\end{proof}

\subsection{Proofs of Linear Transformation Property}
\label{app:linear}

In this section, we provide the proof of the linear transformation property of the Weight-COS-DPO loss. Instead of \cref{prop:linear} in the main text, we provide a more general proposition that considers the penalization terms in the Weight-COS-DPO loss introduced in \cref{app:penalization}. The takeaway of this generalization is that the linear transformation property still holds whenever the penalization term is a function of the normalized loss function $\Ls_{\rm LiPO}$.

\begin{proposition}[Linear Transformation Property with Penalization Terms]
    For any $\vbeta \in \R_+^m$ and $\vw\in\Delta^m$, we denote the model obtained by optimizing the Weight-COS-DPO loss~\eqref{eq:W-COS_loss} with temperature $\vbeta$ as $s_{\theta, \vbeta}(\vy, \vw | \vx)$, and suppose the penalization term $\gG_{\vw}(s_{\theta}; s_0, \vbeta)$ is a function of $\bm\Ls_{\rm LiPO}(s_\theta(\cdot, \vw | \vx); s_0, \vbeta, \gD_{\mathrm{MOFT}})$. 
    
    Then $s_{\theta, \vbeta}(\vy, \vw | \vx)$ should satisfy the linear transformation that for any $c > 0$, we have that
    \begin{equation}
        s_{\theta, c\vbeta}(\vy, \vw | \vx) = \left(1 - \frac{1}{c}\right) s_0(\vy | \vx) + \frac{1}{c} s_{\theta, \vbeta}(\vy, \vw | \vx)
        \label{eq:linear_transform_penalization}
    \end{equation}
    is also an optimal solution to the Weight-COS-DPO loss~\eqref{eq:W-COS_loss} with temperature $c\vbeta$.
    \label{prop:linear_penalization}
\end{proposition}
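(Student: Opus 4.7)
The plan is to reduce the proposition to a single algebraic invariance of the LiPO loss, then propagate it through the weighted sum, the penalization, and finally the expectation over $\vw \sim \mathrm{Dir}(\valpha)$.

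\textbf{Step 1: Invariance of the LiPO loss under the linear transformation.} The key observation is that the LiPO loss from~\eqref{eq:listnet_dpo} depends on $(s_\theta, s_0, \beta)$ only through the quantities $\beta\bigl(s_\theta(\vy_i|\vx) - s_0(\vy_i|\vx)\bigr)$ inside both numerator and denominator. If I set
\[
\widetilde s(\vy,\vw|\vx) := \Bigl(1 - \tfrac{1}{c}\Bigr) s_0(\vy|\vx) + \tfrac{1}{c}\, s_{\theta,\vbeta}(\vy,\vw|\vx),
\]
then a direct computation gives $c\beta_j\bigl(\widetilde s - s_0\bigr) = \beta_j\bigl(s_{\theta,\vbeta} - s_0\bigr)$ for each $j\in[m]$. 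Substituting this identity into~\eqref{eq:listnet_dpo} yields
\[
\Ls_{\rm LiPO}(\widetilde s;\, s_0, c\beta_j, \gD_{\mathrm{MOFT}}^j) = \Ls_{\rm LiPO}(s_{\theta,\vbeta};\, s_0, \beta_j, \gD_{\mathrm{MOFT}}^j),
\]
and therefore $\bm\Ls_{\rm LiPO}(\widetilde s; s_0, c\vbeta, \gD_{\mathrm{MOFT}}) = \bm\Ls_{\rm LiPO}(s_{\theta,\vbeta}; s_0, \vbeta, \gD_{\mathrm{MOFT}})$.

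\textbf{Step 2: Propagating invariance to the penalized scalarized objective.} Since $\Ls_\vw$ is a $\vw$-linear combination of the coordinates of $\bm\Ls_{\rm LiPO}$, Step~1 immediately gives $\Ls_\vw(\widetilde s; s_0, c\vbeta, \gD_{\mathrm{MOFT}}) = \Ls_\vw(s_{\theta,\vbeta}; s_0, \vbeta, \gD_{\mathrm{MOFT}})$. By the standing assumption, $\gG_\vw(s_\theta; s_0, \vbeta)$ is a function of $\bm\Ls_{\rm LiPO}(s_\theta(\cdot,\vw|\vx); s_0, \vbeta, \gD_{\mathrm{MOFT}})$, and hence the same equality holds for $\gG_\vw$. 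The penalized integrand in~\eqref{eq:W-COS_loss_penalization} is thus pointwise invariant under the map $(\vbeta, s_{\theta,\vbeta}) \mapsto (c\vbeta, \widetilde s)$, and taking the expectation over $\vw \sim \mathrm{Dir}(\valpha)$ preserves this invariance.

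\textbf{Step 3: Transferring optimality.} The map $s \mapsto \bigl(1-\tfrac{1}{c}\bigr) s_0 + \tfrac{1}{c}\, s$ is a bijection on the space of score functions (parametrized here by $\theta$, which we assume expressive enough to realize the image). Therefore minimizing $\Ls_{\mathrm{W\text{-}COS}}(\cdot; s_0, c\vbeta, \gD_{\mathrm{MOFT}}, \valpha, \lambda)$ over $s$ is equivalent, via this bijection, to minimizing $\Ls_{\mathrm{W\text{-}COS}}(\cdot; s_0, \vbeta, \gD_{\mathrm{MOFT}}, \valpha, \lambda)$ over $s$ by Steps~1--2. Since $s_{\theta,\vbeta}$ attains the latter minimum by hypothesis, its image $\widetilde s$ attains the former, which is exactly~\eqref{eq:linear_transform_penalization}. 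The original \cref{prop:linear} follows as the special case $\lambda = 0$ (equivalently $\gG_\vw \equiv 0$, which is trivially a function of $\bm\Ls_{\rm LiPO}$).

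\textbf{Expected main obstacle.} The arithmetic in Step~1 is elementary; the only subtle point is justifying Step~3, namely that the change of variables $s \leftrightarrow \widetilde s$ does not shrink the feasible set of score functions. This is clean if we take $\Theta$ to be ``all score functions'' (as in the informal statement), but in a fully rigorous write-up one would want to note that~\eqref{eq:linear_transform_penalization} is asserted as \emph{an} optimal solution, not uniqueness, so it suffices that $\widetilde s$ lies in the admissible class whenever $s_{\theta,\vbeta}$ does---a mild assumption on the parametrization.
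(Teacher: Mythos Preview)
Your proposal is correct and follows essentially the same argument as the paper. Both proofs rest on the single algebraic identity $c\beta_j(\widetilde s - s_0) = \beta_j(s_{\theta,\vbeta} - s_0)$ (equivalently the paper's reparametrization $s_\theta = c s'_\theta + (1-c) s_0$), then propagate it through $\Ls_\vw$, the penalization, and the expectation over $\vw$; the only cosmetic difference is that the paper treats the unpenalized case $\lambda=0$ explicitly first and then abstracts via a function $\Phi$ of $s_\theta - s_0$, whereas you handle the penalized case in one pass by working directly with the invariance of $\bm\Ls_{\rm LiPO}$.
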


\begin{proof}[Proof of \cref{prop:linear_penalization}]

    For clarity, we first remove the penalization, \emph{i.e.} to consider the case where $\lambda = 0$.
    
    Then the Weight-COS-DPO loss~\eqref{eq:W-COS_loss} is of the following form:
    \begin{equation*}
        \begin{aligned}
            &\Ls_{\rm W\hyphen COS}(s_\theta(\cdot, \vw | \vx); s_0, \vbeta, \gD_{\mathrm{MOFT}}) \\
            =& \E_{\vw \sim \mathrm{Dir}(\valpha)}\left[\Ls_{\vw}(s_\theta(\cdot, \vw | \vx); s_0, \vbeta, \gD_{\mathrm{MOFT}})\right]\\
            =& \E_{\vw \sim \mathrm{Dir}(\valpha)}\left[\sum_{j=1}^m w_j \Ls_{\rm LiPO}(s_\theta(\cdot, \vw | \vx); s_0, \vbeta, \gD_{\mathrm{MOFT}}^j)\right]\\
            =&  \E_{\vw \sim \mathrm{Dir}(\valpha)}\left[\sum_{j=1}^m w_j \E\left[ \sum_{i=1}^n \overline z_i^j \log \left(\dfrac{ \exp\big(\beta_j(s_{\theta}(\vy_i, \vw | \vx) - s_0(\vy_i, \vw | \vx))\big)}{\sum_{i'=1}^n  \exp\big(\beta_j (s_{\theta}(\vy_{i'}, \vw | \vx) - s_0(\vy_{i'}, \vw | \vx))\big)}\right) \right]\right]\\
            =&  \E\left[\sum_{j=1}^m  \sum_{i=1}^n w_j \overline z_i^j \log \left(\dfrac{ \exp\big(\beta_j(s_{\theta}(\vy_i, \vw | \vx) - s_0(\vy_i, \vw | \vx))\big)}{\sum_{i'=1}^n  \exp\big(\beta_j (s_{\theta}(\vy_{i'}, \vw | \vx) - s_0(\vy_{i'}, \vw | \vx))\big)}\right) \right],
        \end{aligned}
    \end{equation*}
    where the expectation in the second to last equality is taken over the data distribution $\gD_{\mathrm{MOFT}}$, and the expectation in the last equality is taken over both the data distribution $\gD_{\mathrm{MOFT}}$ and the weight distribution $\mathrm{Dir}(\valpha)$ as a shorthand notation.

    By the definition of the model $s_{\theta, \vbeta}(\vy, \vw | \vx)$, we have that 
    \begin{equation*}
        \begin{aligned}
            s_{\theta, \vbeta}(\vy, \vw | \vx) 
            =& \argmax_{s_{\theta}(\vy, \vw | \vx)} \E\left[ \sum_{j=1}^m  \sum_{i=1}^n w_j \overline z_i^j \log \left(\dfrac{ \exp\big(\beta_j(s_{\theta}(\vy_i, \vw | \vx) - s_0(\vy_i, \vw | \vx))\big)}{\sum_{i'=1}^n  \exp\big(\beta_j (s_{\theta}(\vy_{i'}, \vw | \vx) - s_0(\vy_{i'}, \vw | \vx))\big)}\right) \right].
        \end{aligned}
    \end{equation*}

    We now consider the following reparametrized optimization problem:
    \begin{equation}
        \begin{aligned}
            \max_{s'_{\theta}(\vy, \vw | \vx)} 
            &\E\bigg[\sum_{j=1}^m  \sum_{i=1}^n w_j \overline z_i^j \log \left(\dfrac{ \exp\big(\beta_j(c s'_{\theta}(\vy_i, \vw | \vx) + (1 - c) s_{0}(\vy_i, \vw | \vx)- s_0(\vy_i, \vw | \vx))\big)}{\sum_{i'=1}^n  \exp\big(\beta_j (c s'_{\theta}(\vy, \vw | \vx) + (1 - c) s_{0}(\vy_{i'}, \vw | \vx) - s_0(\vy_{i'}, \vw | \vx))\big)}\right) \bigg]\\
            =& \E\left[ \sum_{j=1}^m  \sum_{i=1}^n w_j \overline z_i^j \log \left(\dfrac{ \exp\big(c\beta_j(s'_{\theta}(\vy_i, \vw | \vx) - s_0(\vy_i, \vw | \vx))\big)}{\sum_{i'=1}^n  \exp\big(c\beta_j (s'_{\theta}(\vy_{i'}, \vw | \vx) - s_0(\vy_{i'}, \vw | \vx))\big)}\right) \right],
        \end{aligned}
        \label{eq:linear_transform_2}
    \end{equation}
    obtained by reparametrizing $s_{\theta}(\vy, \vw | \vx)$ as 
    \begin{equation}
        s_{\theta}(\vy, \vw | \vx) = c s'_{\theta}(\vy, \vw | \vx) + (1 - c) s_{0}(\vy, \vw | \vx),
        \label{eq:linear_transform_reparam}
    \end{equation}
    and thus by solving
    \begin{equation*}
        s_{\theta, \vbeta}(\vy, \vw | \vx) = c s'_{\theta}(\vy, \vw | \vx) + (1 - c) s_{0}(\vy, \vw | \vx),
    \end{equation*}
    we have 
    \begin{equation}
        s'_{\theta}(\vy, \vw | \vx) = \dfrac{1}{c} s_{\theta, \vbeta}(\vy, \vw | \vx) - \dfrac{1 - c}{c} s_{0}(\vy, \vw | \vx)
        \label{eq:linear_transform_3}
    \end{equation}
    is an optimal solution to the reparametrized optimization problem.

    Notice that the function in the optimization problem~\eqref{eq:linear_transform_2} is exactly the Weight-COS-DPO loss~\eqref{eq:W-COS_loss} with the temperature $c\vbeta$, we have that the $s_{\theta, c \vbeta}(\vy, \vw | \vx)$ as defined in~\eqref{eq:linear_transform} coincides with the optimal solution~\eqref{eq:linear_transform_3}. Thus we have proved the linear transformation property for the Weight-COS-DPO loss with $\lambda = 0$.
    
    For the case with penalization, we assume the penalization term $\gG_{\vw}(s_{\theta}; s_0, \vbeta)$ is a function of the vector of LiPO losses $\bm\Ls_{\rm LiPO}(s_\theta(\cdot, \vw | \vx); s_0, \vbeta, \gD_{\mathrm{MOFT}})$, which is satisfied for the cosine similarity penalization loss~\eqref{eq:penalization} as proposed by~\citet{ruchte2021scalable}. And in turn, the vector of LiPO losses $\bm\Ls_{\rm LiPO}(s_\theta(\cdot, \vw | \vx); s_0, \vbeta, \gD_{\mathrm{MOFT}})$ depends on $s_{\theta}(\cdot, \vw | \vx)$ only in the form of $s_{\theta}(\cdot, \vw | \vx) - s_0(\cdot, \vw | \vx)$, and therefore, we could write the Weight-COS-DPO loss in an abstract form as 
    \begin{equation*}
        s_{\theta, \vbeta}(\vy, \vw | \vx) 
        = \argmax_{s_{\theta}(\vy, \vw | \vx)} \E\left[ \Phi\left(s_{\theta}(\cdot, \vw | \vx) - s_0(\cdot, \vw | \vx)\right) \right],
    \end{equation*}
    \emph{e.g.}, for the case where $\lambda = 0$, $\Phi$ is of the following form:
    \begin{equation*}
        \begin{aligned}
            &\Phi(s_{\theta}(\cdot, \vw | \vx) - s_0(\cdot, \vw | \vx)) \\
            =& 
            \sum_{j=1}^m  \sum_{i=1}^n w_j \overline z_i^j \log \left(\dfrac{ \exp\big(\beta_j(s_{\theta}(\vy_i, \vw | \vx) - s_0(\vy_i, \vw | \vx))\big)}{\sum_{i'=1}^n  \exp\big(\beta_j (s_{\theta}(\vy_{i'}, \vw | \vx) - s_0(\vy_{i'}, \vw | \vx))\big)}\right).
        \end{aligned}
    \end{equation*}

    Apply the same reparametrization as in~\eqref{eq:linear_transform_reparam}, we have that the reparametrized optimization problem is of the form
    \begin{equation*}
        \begin{aligned}
            \max_{s'_{\theta}(\vy, \vw | \vx)} 
            &\E\left[ \Phi\left(c s'_{\theta}(\cdot, \vw | \vx) + (1 - c) s_{0}(\cdot, \vw | \vx) - s_0(\cdot, \vw | \vx)\right) \right]\\
            =& \E\left[ \Phi\left(c s'_{\theta}(\cdot, \vw | \vx) - c s_{0}(\cdot, \vw | \vx)\right) \right],
        \end{aligned}
    \end{equation*}
    with an optimal solution in the form of~\eqref{eq:linear_transform_3}. Therefore, the linear transformation property also holds for the Weight-COS-DPO loss with the penalization term.
\end{proof}

\end{document}